\theoremstyle{definition}
\newtheorem{definition}{Definition}
\newtheorem{theorem}{Theorem}
\newtheorem{remark}{Remark}
\newtheorem{lemma}{Lemma}
\newcommand{\quotes}[1]{``#1''}
\newdimen{\algindent}
\algnewcommand\LeftComment[2]{
\Statex \hspace{#1\algindent}\(\triangleright\) \eqparbox{}{\textit{#2}} \hfill 
}
\begin{document}
\title{AFed: Algorithmic Fair Federated Learning}

\author{Huiqiang~Chen,~
        Tianqing~Zhu,~~\IEEEmembership{Member,~IEEE}
        Wanlei~Zhou,~~\IEEEmembership{Fellow,~IEEE}
        and~Wei~Zhao,~\IEEEmembership{Fellow,~IEEE}
\thanks{This research is partially supported by NSFC-FDCT under its Joint Scientific Research Project Fund (Grant No. 0051/2022/AFJ).}
\thanks{Huiqiang Chen, Tianqing Zhu, and Wanlei Zhou are with the Faculty of Data Science, City University of Macau, Macao, China (e-mail:cs.hqchen@gmail.com; tqzhu@cityu.edu.mo; wlzhou@cityu.edu.mo).}
\thanks{Wei Zhao is with Shenzhen University of Advanced Technology, Shenzhen, China. (e-mail: weizhao86@outlook.com).}
}

\markboth{Journal of \LaTeX\ Class Files,~Vol.~14, No.~8, August~2015}%
{Shell \MakeLowercase{\textit{et al.}}: Bare Demo of IEEEtran.cls for IEEE Journals}

\maketitle

\begin{abstract}
Federated Learning (FL) has gained significant attention as it facilitates collaborative machine learning among multiple clients without centralizing their data on a server. FL ensures the privacy of participating clients by locally storing their data, which creates new challenges in fairness. Traditional debiasing methods assume centralized access to sensitive information, rendering them impractical for the FL setting. Additionally, FL is more susceptible to fairness issues than centralized machine learning due to the diverse client data sources that may be associated with group information. Therefore, training a fair model in FL without access to client local data is important and challenging. This paper presents AFed, a straightforward yet effective framework for promoting group fairness in FL. The core idea is to circumvent restricted data access by learning the global data distribution. This paper proposes two approaches: AFed-G, which uses a conditional generator trained on the server side, and AFed-GAN, which improves upon AFed-G by training a conditional GAN on the client side. We augment the client data with the generated samples to help remove bias. Our theoretical analysis justifies the proposed methods, and empirical results on multiple real-world datasets demonstrate a substantial improvement in AFed over several baselines.
\end{abstract}

\begin{IEEEkeywords}
Federated learning, fair machine learning, generative model.
\end{IEEEkeywords}

\section{Introduction}
\IEEEPARstart{W}{}ith the increasing use of machine learning algorithms in critical decision-making areas such as credit evaluation, loan applications, and healthcare, there are concerns about potential bias and discrimination in trained models. For instance, COMPAS, a software used by courts to aid judges in pretrial detention and release decisions, has been found to have a substantial bias against African-Americans when comparing false positive rates of African-American offenders to Caucasian offenders.

As a result, fairness research has gained prominence, with group fairness \cite{hardt2016equality} and individual fairness \cite{dwork2012fairness} being two broad classifications for approaches to fairness. We refer to these two concepts as algorithmic fairness because the definition of fairness is expanded in the context of FL, such as \textit{accuracy parity} \cite{li2019fair}, \textit{good-intent fairness} \cite{mohri2019agnostic,du2021fairness}. This paper presents an algorithmic fair FL framework emphasizing group fairness since clients tend to belong to specific demographic groups in FL \cite{kairouz2019advances}.

Various algorithms have been proposed to train an algorithmic fair model in centralized machine learning \cite{zhang2021balancing,zhang2020fairness,zhang2019fairness,zafar2017fairness}, including \textit {pre-processing}, \textit{in-processing} and \textit{post-processing} \cite{d2017conscientious}. Although the details differ, all of these algorithms assume centralized access to data. In FL, however, training data is not centrally accessible by either the server or the clients. Therefore, one cannot simply apply centralized fair learning algorithms in FL tasks. This begs the question: \textbf{How to train an algorithmic fair FL model without centralized access to data?}

This problem is challenging due to restricted access and data heterogeneity. FL models are trained on all clients' local data, with only gradients/model parameters transmitted to the server during each round. Debiasing on the server side is not feasible since the server is prohibited from directly accessing clients' local data. On the other hand, a single client can't accurately picture the global distribution due to limited data \cite{mcmahan2017communication}. Worse yet, data distributions vary between clients \cite{kairouz2019advances}. There is no guarantee that the model debiased with local data will generalize to the global distribution. In addition, it has been long observed that fairness conflicts with accuracy \cite{zhao2022inherent,berk2017convex}. A trivial model that makes constant predictions is perfectly fair but useless.

To address these challenges, we proposed a novel framework named AFed to achieve algorithmic fair FL. AFed aims to learn a fair model satisfying group fairness (e.g., demographic parity \cite{hardt2016equality}) in FL. The core idea is to learn global data distribution and then disseminate that knowledge to all clients to help with local debiasing. To this end, two algorithms are proposed. Our first algorithm, AFed-G, trains a conditional generator to learn clients' data distribution on the server side. AFed-G enjoys less computation and communication overhead. The second algorithm, AFed-GAN, trains a conditional GAN on the client side, which is more effective in learning data distribution. An auxiliary classification head is designed to help extract informative features that benefit the generator training. The knowledge about global distribution is then shared among all clients to help debiasing the model. We mix the true training data with generated data to debiase the model. 

In summary, the contributions of this paper are listed below.
\begin{itemize}
    \item We tackle the challenge of training a fair model in FL without accessing clients' data. We bypass the restricted centralized data access by introducing a conditional generator/GAN.
    \item We design an auxiliary classification head to help extract more informative features, which benefits the conditional generator/GAN training.
    \item We provide theoretical analysis to justify the robustness of the proposed method. Our empirical results demonstrate substantial improvement of the proposed methods over baselines on several real-world datasets.
\end{itemize}

\section{Related works}
\subsection{Fair Machine learning}
At a high level, algorithmic fairness notions adopted in centralized machine learning can be categorized as two different families: the \textit{statistical} notion and \textit{individual} notion \cite{chouldechova2018frontiers}. The statistical notions require specific statistical measures to be comparable across all of these groups \cite{6137441,calders2010three}. The individual notions aim at fairness between specific pairs of individuals: \textit{Give similar predictions to similar individuals} \cite{dwork2012fairness}. Individual notions offer a meaningful fairness guarantee but need to make significant assumptions, some of which present non-trivial challenges in fairness. We adopt statistical notions, more specifically, demographic parity \cite{hardt2016equality} as fairness metric in this paper.

Debiasing methods can be classified as \textit {pre-processing}, \textit{in-processing}, and \textit{post-processing}, respectively \cite{d2017conscientious}. Pre-processing improves fairness by eliminating bias present in the training dataset. The model is subsequently trained and applied on the modified data \cite{feldman2015certifying,kamiran2012data}. For instance, Kamiran et al. \cite{kamiran2012data} propose reweighting samples in the training dataset to correct for biased treatment. Xu et al. \cite{xu2018fairgan} proposed FairGAN, which generates fair data from the original training data and uses the generated data to train the model. In-processing achieves fairness by adding constraints during training. Berk et al. \cite{berk2017convex} integrated a family of fairness regularizers into the objective function for regression issues. Zhang et al. \cite{zhang2018mitigating} borrowed the ideal of adversarial training to restrict a model's bias. The general methodology of post-processing methods is to adjust the decision thresholds of different groups following specific fairness standards \cite{calders2010three,hardt2016equality,bolukbasi2016man}. However, existing methods assume centralized access to the data, which is invalid in FL. Each client can only access its own data, and the server does not know the local data distribution. As such, it remains challenging to train a fair model in FL.

\subsection{Fairness in Federated Learning}
Fairness can be defined from different aspects in FL. A body of work focuses on accuracy disparity, i.e., trying to minimize the performance gaps between different clients \cite{li2019fair}. A model with a more uniform performance distribution is considered fairer. The goal, in this case, is to address any statistical heterogeneity during the training phase by using techniques like data augmentation \cite{zhao2018federated}, client selection \cite{yang2020federated} and reweighting \cite{ huang2020fairness}, multi-task learning \cite{smith2017federated}, etc. Another area of focus is good-intent fairness \cite{mohri2019agnostic}, ensuring that the training procedure does not overfit a model to any client at the expense of others. In this instance, the objective is to train a robust model against an unknown testing distribution. 

However, these two lines of work don't address the algorithmic fairness issues in FL. To date, there are limited works about algorithmic fairness in FL. Du et al. \cite{du2021fairness} formulated a distributed optimization problem with fairness constraints. Gálvez et al. \cite{Galvez2021EnforcingFI} cast fairness as a constraint to an optimization problem. To protect privacy, the constraint problem is optimized based on statistics provided by clients. Chu et al. \cite{Chu2021FedFairTF} and  Cui et al. \cite{cui2021addressing} took a similar path, imposing fairness constraints on the learning problem. To avoid intruding on the data privacy of any client, the fairness violations are locally calculated by each client. Zhang et al. \cite{zhang2020fairfl} focused on client selection, using a team Markov game to select clients to ensure fairness.  Ezzeldin et al. \cite{ezzeldin2021fairfed} proposed a fairness-aware aggregation method that amplifies local debiasing by weighting clients whose fairness metric aligns well with the global one. Wang et al. \cite{wang2023mitigating} explore the relationship between local group fairness and propose a regularized objective for achieving global fairness. Zeng et al. \cite{zeng2021improving} extend FairBatch \cite{roh2020fairbatch}, a fair training method in centralized machine learning, to FL. In their design, each client shares extra information about the unfairness of its local classifier with the server. Chu et al. \cite{chu2021fedfairtrainingfairmodels} propose estimating global fairness violations by aggregating the fairness assessments made locally by each client. They then constrain the model training process based on the global fairness violation.

\subsection{Discussion of Related Works}
Currently, most FL research focuses on accuracy disparity, which is more closely related to data heterogeneity rather than the potentially unfair decisions the model may make. Comparatively, less research has been conducted regarding algorithmic fairness in FL. The difficulty of ensuring algorithmic fair FL stems from the privacy restriction of FL, i.e., the data never leaves the device. In the presence of client-specific data heterogeneity, however, local debiasing may fail to provide acceptable performance for the entire population. Existing works have attempted to circumvent this restriction by providing server statistics or calculating fairness constraints locally. However, they fail to adequately address the non-i.i.d. challenge in FL. For example, in the presence of non-i.i.d. data, the estimated global fairness violation in \cite{chu2021fedfairtrainingfairmodels} could be inaccurate. Our approach distinguishes itself from existing methods in two key ways: 1) Alternative methodology: We explore a different direction by leveraging local data augmentation to enhance debiasing efforts; 2) Enhanced practicality: We address the non-i.i.d. challenges inherent in FL when training fair models. Our local data augmentation technique effectively mitigates these non-i.i.d. issues.

\section{Background}
\subsection{Notations}
This research focuses on addressing fairness issues in the context of binary classification tasks to illustrate our method. Note that our methods can be extended to the general multi-class classification problem since we didn't explicitly require the target label to be binary. Denote $\mathcal X \in \mathbb R^m$ as the input space. $\mathcal Z \in \mathbb R^p$ is the latent feature space with $p<m$. $\mathcal{Y} = \left\{0,1\right\}$ is the output space, and $\mathcal{A} = \left\{0,1\right\}$ is the set of all possible sensitive attribute values. The classification model $f$ parameterized by $\boldsymbol{\theta}^f \coloneqq \boldsymbol{\theta}^E \circ \boldsymbol{\theta}^{y}$ consists of two parts: a feature extractor $E: \mathcal X \mapsto \mathcal Z$ and a classification head $h^y: \mathcal Z \mapsto \Delta^\mathcal Y$, where $\Delta^\mathcal Y$ stands for the probability simplex over $\mathcal Y$. In addition, we train another classification head $h^a: \mathcal Z \mapsto \Delta^\mathcal A$ parameterized by $\boldsymbol{\theta}^{a}$ to classify sensitive attribute $a$. In AFed-G, a conditional generator $G$ parameterized by $\boldsymbol{w}^G$ is trained at the server side to synthesize fake representations condition on $a$: $\mathcal A \mapsto \mathcal Z$. In AFed-GAN, we replace $G$ with conditional GAN parameterized by $\boldsymbol{w} \coloneqq \boldsymbol{w}^G \circ \boldsymbol{w}^D$. The discriminator $D: \boldsymbol{z} \times a \mapsto \left[0, 1\right]$ is trained to distinguish fake samples from real ones.

\subsection{Federated Learning}
A FL system comprises $N$ local clients and a central server. Each client possesses a specific dataset $\mathcal{D}^k$, $k\in \left\{1,2,...,N\right\}$ and the $i$-th data sample is represented as $\xi_i \coloneqq \left\{\boldsymbol{x}_i, y_i, a_i\right\}$, where $\boldsymbol{x}_i $ is the features, $y_i$ is the ground-true label, and $a_i$ is the sensitive attribute. The amount of data of the $k$-th client is denoted as $n_k$. The goal of FL is to minimize the empirical risk over the samples of all clients, i.e.,
\begin{equation}
    \boldsymbol{\theta}^f = \mathop{\arg\min} \limits_{\boldsymbol{\theta}^f} \frac{1}{\sum_{k=1}^N n_k} \sum^N_{k=1}\sum^{n_k}_{i=1}\ell_k(f(\boldsymbol{x}_i;\boldsymbol{\theta}^f),y_i),
\end{equation}
where $\ell_k$ is the loss objective of the $k$-th client. Unlike central machine learning, which assumes centralized access to all train data $\mathcal{D} = \cup_{k \in N} \mathcal{D}_k$, the client's training data is never exposed to the server. Only gradients/model parameters are transmitted to the server for aggregation. 

At the $t$-th training round,  the server randomly selects a set of clients $\mathcal{S}$ to train the current global model $\boldsymbol{w}^{t}$. Selected clients first receive $\boldsymbol{w}^{t}$ from the server, then train it on their local dataset $\mathcal{D}^k, k\in \mathcal{S}$,
\begin{equation} \label{eq: sgd}
    \boldsymbol{\theta}_k^{t+1} = \boldsymbol{\theta}^{t}- \frac{\eta}{b_k^t}\sum \limits_{i\in \mathcal{B}^k_t} \nabla \ell_k(f(\boldsymbol{x}_i;\boldsymbol{\theta}_k),y_i),
\end{equation}
where $\eta$ denotes the learning rate and $\mathcal{B}_t^k$ is a mini-batch of training data sampled from $\mathcal{D}_k$ in the $t$-th iteration. The local training process could run for multiple rounds. After that, the trained model $\boldsymbol{\theta}_k^{t}$ or equivalently gradients $\Delta \boldsymbol{\theta} \coloneqq \boldsymbol{\theta}_k^{t+1}-\boldsymbol{\theta}^{t}$ is send back to the server for aggregation,
\begin{equation} \label{eq: avg}
     \boldsymbol{\theta}^{t+1} = \sum \limits_ {k \in \mathcal{S}}\omega_k \boldsymbol{\theta}_k^{t+1},
\end{equation}
where $\omega_k =n_k/{\sum_{k\in\mathcal{S}}n_k}$ is the aggregation weight. $\boldsymbol{\theta}^{t+1}$ is used as the initial model for the next training round. This interaction between clients and the server is repeated until a certain criterion (e.g., accuracy) is satisfied. 

\subsection{Fairness}
Fairness in FL can be defined in different ways. This paper focuses on algorithmic fairness, especially group fairness, which is measured by the disparities in the algorithm decisions made for different groups determined by sensitive attributes, such as gender, race, etc. Here, we have opted for demographic parity as the fairness metric in this paper. 
\begin{definition}[Demographic parity \cite{hardt2016equality}] Demographic parity (DP) requires that a model's prediction be independent of any sensitive attributes. The positive prediction rate for a demographic group $a \in \mathcal{A}$ is defined as follows,
\begin{gather}
    \gamma_0(\hat Y) =\textrm{Pr}\left(\hat Y=1| A=0\right) = \mathbb{E}_{x \sim P_0}\left[f(x)\right]\\
    \gamma_1(\hat Y) = \textrm{Pr}\left(\hat Y=1| A=1\right)=\mathbb{E}_{x \sim P_1}\left[f(x)\right]
\end{gather}
\end{definition}
The task of fair supervised learning is to train a classification model $f$ that can accurately label samples while minimizing discrimination. The discrimination level in terms of demographic parity can be measured by the absolute difference in the model's outcomes between groups
\begin{equation}\label{eq: dpreg}
    \Delta \textrm{DP} = |\gamma_0(\hat Y) - \gamma_1(\hat Y)|
\end{equation}
The smaller $\Delta \textrm{DP}(f)$ is, the fairer $f$ will be. $\Delta \textrm{DP}(f) =0 $ iif $f(x) \perp A$, i.e., demographic parity is achieved.

\subsection{GAN and Conditional GAN}
GAN was first proposed by \cite{goodfellow2014generative} as a min-max game between the generator $G$ and discriminator $D$. The aim is to approximate a target data distribution so that the generated data can't be separated from the true data by $D$. The objective function of the generator is formed as
\begin{equation}
    \mathop{\min \limits_G \max \limits_D} \mathbb{E}_{\boldsymbol{x}\sim p(x)} \left[\log D(x) + \log(1-D(G(x)))\right],
\end{equation}
where $p(x)$ is the real data distribution.

There exist various kinds of generators to generate synthetic data for different purposes. Among these, we chose conditional GAN \cite{mirza2014conditional} to generate synthetic data conditional on the given input. The conditional GAN differs from the unconditional GAN by providing the generator and discriminator condition information. The learning objective of conditional GAN is formulated as follows:
\begin{equation}
    \mathop{\min \limits_G \max \limits_D} \mathbb{E}_{x\sim p(x)} \left[\log D(x,a) + \log(1-D(G(x,a)))\right].
\end{equation}
The additional information $a$ is fed to the generator and the discriminator during training. After convergence, the generator can generate fake data conditional on $a$.

\begin{figure*}[htp]
    \centering
    \begin{subfigure}[b]{0.24\textwidth}
        \includegraphics[width=\textwidth]{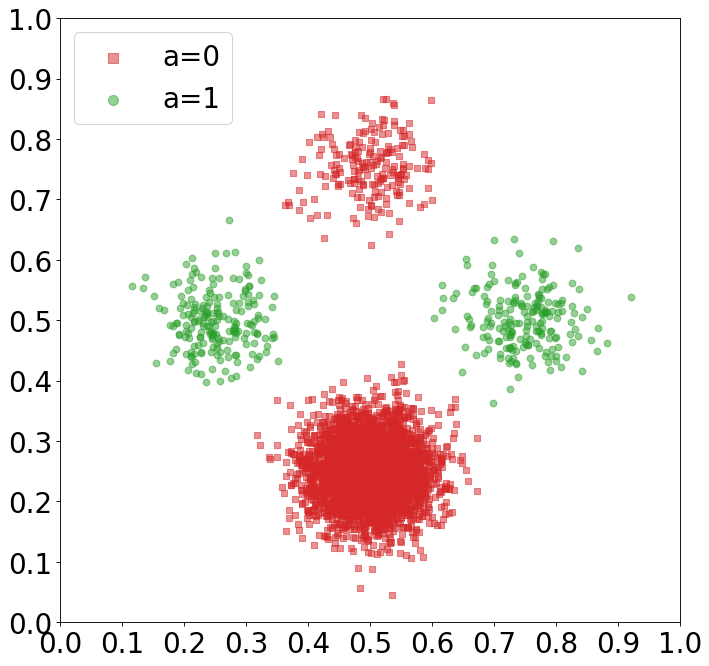}
        \caption{Client 1's data}
        \label{fig: client1 data}
    \end{subfigure}
    \hfill
    \begin{subfigure}[b]{0.24\textwidth}
        \includegraphics[width=\textwidth]{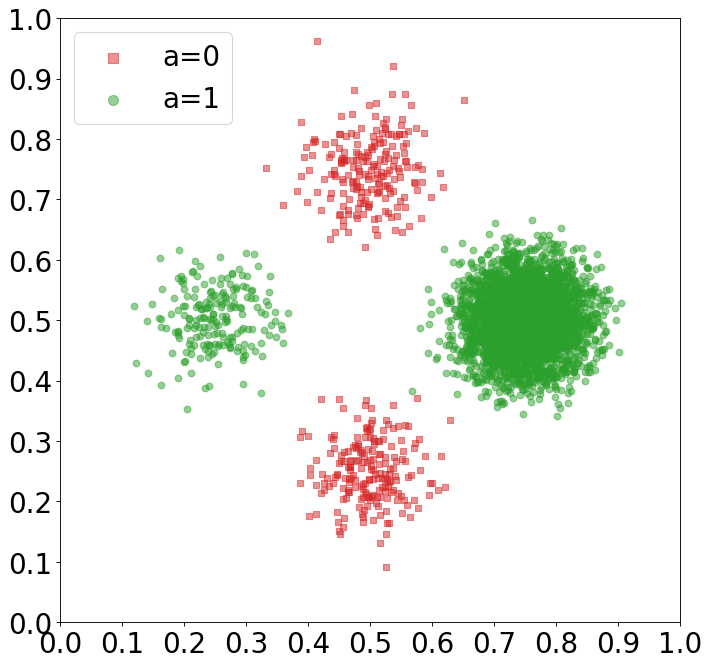}
        \caption{Client 2's data}
        \label{fig: client2 data}
    \end{subfigure}
    \hfill
    \begin{subfigure}[b]{0.24\textwidth}
        \includegraphics[width=\textwidth]{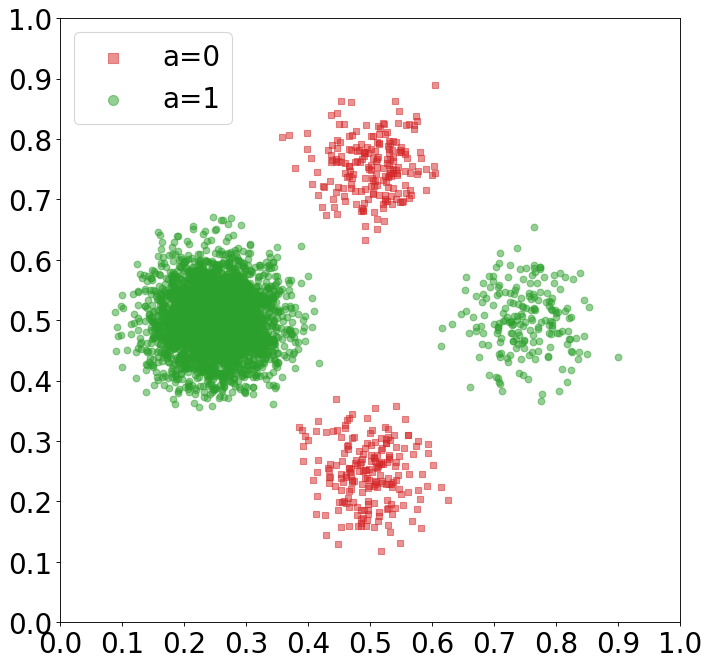}
        \caption{Client 3's data}
        \label{fig: client3 data}
    \end{subfigure}
    \hfill
    \begin{subfigure}[b]{0.24\textwidth}
        \includegraphics[width=\textwidth]{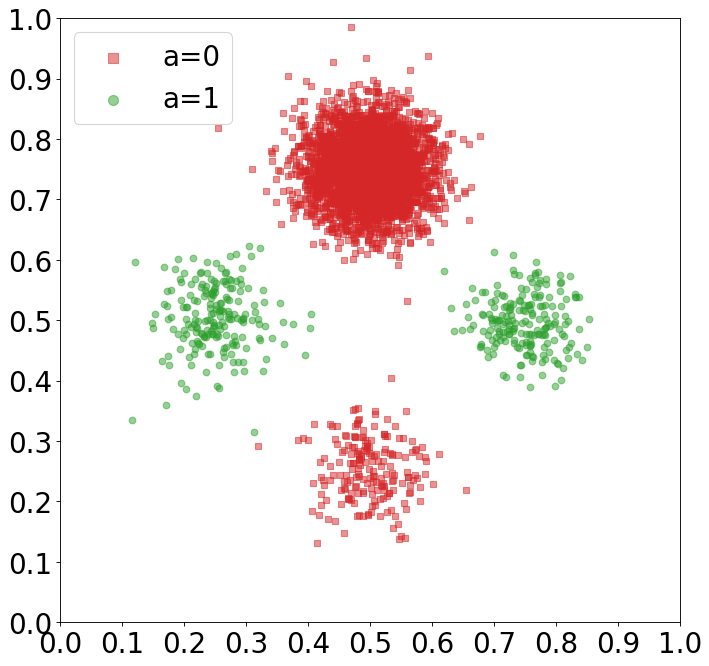}
        \caption{Client 4's data}
        \label{fig: client4 data}
    \end{subfigure}
    \caption{The data distribution of each client in the toy example. Data is sampled from a mixture of four Gaussian distributions. For each client, 85\% of the data is sampled from a single Gaussian distribution, and the rest 15\% is evenly sampled from the other three Gaussian distributions.}
\end{figure*}
\section{Algorithmic Fair Federated Learning}
\subsection{Motivation Example}
The non-i.i.d. nature of local data distributions in a typical FL system presents a significant challenge for local debiasing. Certain groups may dominate a client's data, while others may have limited representation. To illustrate this challenge, we create a toy FL system with four clients, each with a unique local dataset having different distributions. We generated the data from a mixture of four Gaussian distributions and grouped them based on the attribute $a$ into two categories. In this example, 85\% of a client's data is sampled from one Gaussian distribution, while the remaining 15\% is evenly distributed across the other three Gaussian distributions.

In this situation, local debiasing is futile because none of the clients completely understand the data distribution across all clients. For example, client 1 has very few samples from group 1 (i.e., $a=1$), as shown in Fig. \ref{fig: client1 data}. Even less is the proportion when all clients' data is considered. Therefore, it is difficult for client 1 to debias the classification model, given that client 1 has almost no knowledge of group 1. The same issue arises with the other clients. Our theoretical analysis in Section V confirms this. The performance of the local model on the global distribution depends on the quantity and similarity of the data, making it challenging to achieve a fair FL model through local debiasing alone.

To address this problem, in AFed, a conditional generator/GAN is trained collectively by all clients to extract knowledge about the local data distribution. The result is illustrated in  Fig. \ref{fig: toy example}. It can be seen that the generated data has a similar distribution to that of all client data, indicating that the generator was successful in extracting the knowledge about the local data distribution. The extracted knowledge can be shared with all clients to help local debiasing. With the generated sample, clients can access the global data distribution and achieve a more accurate debiasing outcome. Fig. \ref{fig: FedFair} illustrates the approach.

\begin{figure}[htp] 
    \centering
    \begin{subfigure}[b]{0.24\textwidth}
        \includegraphics[width=\textwidth]{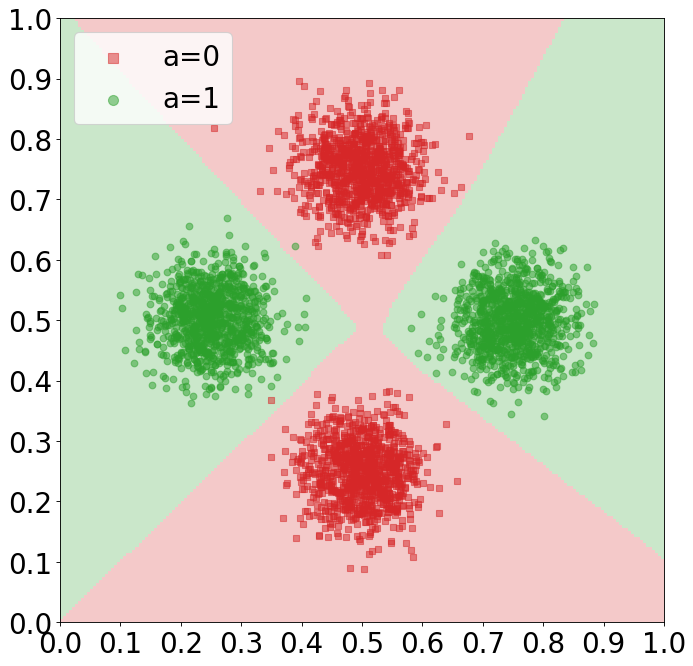} 
        \caption{}
        \label{fig:toy example all client1 data}
    \end{subfigure}
    \hfill
    \begin{subfigure}[b]{0.24\textwidth}
        \includegraphics[width=\textwidth]{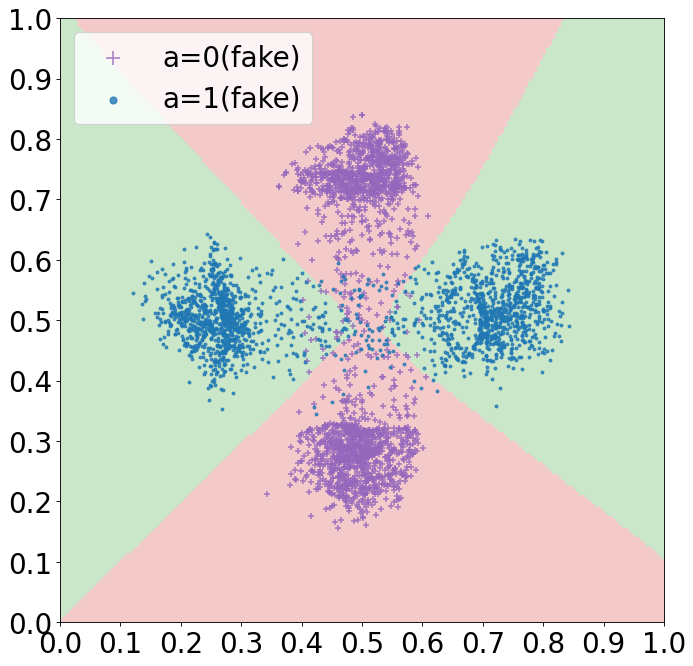}
        \caption{}
        \label{fig: toy example fake data}
    \end{subfigure}
    \caption{(a) The global distribution of all clients' data; (b) The distribution of generated data, the generated data is distributed in the same space as the real data.}
    \label{fig: toy example}
\end{figure}
\subsection{Overview of AFed}
This section details the proposed AFed framework, summarized in Algorithm \ref{alg: AFed}. Fig.\ref{fig: FedFair} shows the general description of AFed, which involves two learning tasks.  As shown on the left part of Fig.\ref{fig: FedFair}, the first task is to extract knowledge about local distribution. To this end, we train a conditional generator/GAN to learn clients' local data distribution. We then aggregate the extracted distributions to form a global distribution embedded in a conditional generator $G$. 

In the right part of Fig.\ref{fig: FedFair}, our second task is to train a fair and accurate model $f$, consisting of a feature extractor and classification head. The accuracy loss $\mathcal{L}_{\textrm{acc}}$ is obtained on real samples. To debias the model, the generator produces a corresponding fake latent feature with a different attribute value for each latent feature. We mix the true features with the fake ones and require the model to make consistent predictions on the mixed samples, which gives the fairness loss $\mathcal{L}_{\textrm{fair}}$. Each of these two tasks is discussed in more detail in the following.
 
\begin{figure*}[t]
\includegraphics[width=\textwidth]{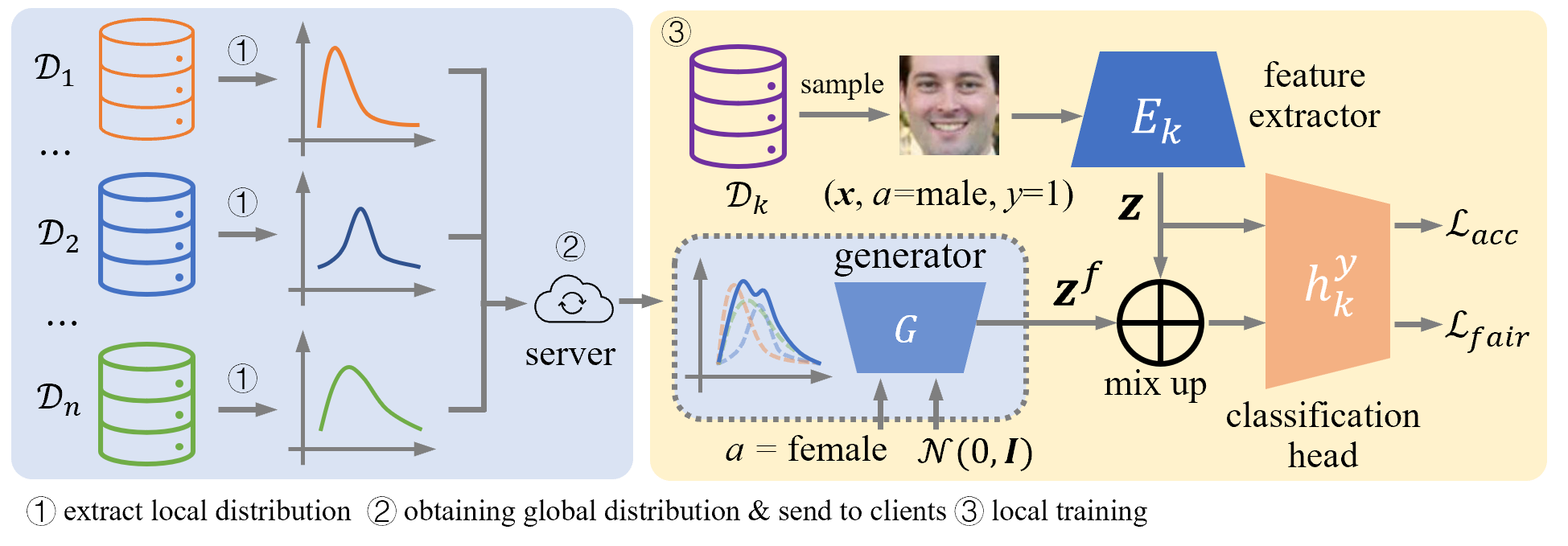}
\caption{An overview of \textbf{AFed Framework}. The key idea is first to extract clients' local distributions and obtain a view of global distribution via a conditional generator $G$, which is then shared with all clients to help local debiasing.}
\label{fig: FedFair}
\end{figure*}
\subsection{Learning Global Data Distribution}
Our first task is to learn the global data distribution. We draw inspiration from the field of knowledge extraction \cite{zhu2021data, NEURIPS2019_596f713f}. We extract local data distribution information before aggregating it to form a global distribution. We chose to perform this task in the latent space $\mathcal Z \in \mathbb{R}^p$ instead of input space $\mathcal X \in\mathbb{R}^d$ for both privacy and practical concerns. Because 1) sample-specific information is compressed in the latent space, only class-related information is retained; and 2) the latent space $\mathcal Z $ is more compact than the input space $\mathcal X$, making optimization more manageable. Note that the object of this paper is to train an algorithmic fair FL model without accessing client data, which is understudied in the literature. We leave both private and fair FL to our future work.

The feature extractor $E$ trained with $h^y$ will only retain the information relevant to $y$. While information about the attribute $a$ may be compressed. To address this, we designed an auxiliary classification head $h^a$ that can assist $E$ in extracting informative latent features. The benefit of $h^a$ is twofold: first, it provides feedback to the extractor $E$, allowing it to extract more informative latent features; second, $h^a$ embeds the relationships between the latent feature $z$ and the sensitive attribute $a$, i.e., $P(A|Z)$. This knowledge is crucial for the server to learn a global data distribution in AFed-G.

For the $k$-th client, the training process of the feature extractor $E(\cdot; \boldsymbol{\theta}^E_k)$ and the classification head $h^a(\cdot; \boldsymbol{\theta}^a_k)$ is formulated as follows:
\begin{equation} \label{eq: opt ha}
\min_{\boldsymbol{\theta}^a_k, \boldsymbol{\theta}^E_k}J_1(\boldsymbol{\theta}^a_k,\boldsymbol{\theta}^E_k) \coloneqq \frac{1}{|\mathcal{D}_k|}\sum_{\mathcal{D}_k} \ell_{CE} \left[ h^a(E(x_i;\boldsymbol{\theta}^E_k);\boldsymbol{\theta}^a_k)),a_i\right],
\end{equation}
where $\ell_{CE}$ is the cross-entropy loss.

A typical federated learning (FL) system faces the challenge of non-i.i.d. data, where each client possesses a distinct data distribution. As a result, a model that is fair in the local data may be unfair when applied to data sampled from the global data distribution. To address this issue, we propose augmenting local data with generated samples. The key is to capture the global data distribution without violating the privacy constraints of the FL system. To achieve this, we introduce two approaches.

\begin{algorithm} [tb]
    \caption{AFed(\textcolor{red}{red: AFed-G}, \textcolor{blue} {blue: AFed-GAN})} 
    \label{alg: AFed}
    \textbf{Input}: Local dataset $\left\{\mathcal{D}_k\right\}_{k=1}^N$.\\
    \textbf{Parameter}: learning rate $\beta$, and $\eta$, epoch $T_1$, $T_2$,  $E$.\\
    \textbf{Output}: $\boldsymbol{\theta}^E$, $\boldsymbol{\theta}^y$,$\boldsymbol{\theta}^a$
    \begin{algorithmic}[1]  
        \For{t=1,2,...,$E$}
            \State Randomly selects a subset of clients $\mathcal{S}$. 
            \For{each client $k \in \mathcal{S}$ in parallel}
                \State $\boldsymbol{\theta}_k^f, \boldsymbol{\theta}_k^a,$\textcolor{blue}{$\boldsymbol{w}_k^G,\boldsymbol{w}_k^D$} $\gets$ \Call{Client}{$\boldsymbol{\theta}^f, \boldsymbol{\theta}^a$, $\boldsymbol{w}^G$,
                \textcolor{blue}{$\boldsymbol{w}^D$}}.
            \EndFor
            \State Aggregates $\boldsymbol{\theta}_k^f, \boldsymbol{\theta}_k^a,$\textcolor{blue}{$\boldsymbol{w}_k^G,\boldsymbol{w}_k^D$} 
            \State \textcolor{red}{Updates $G$: $\boldsymbol{w}^G \gets \boldsymbol{w}^G - \alpha \nabla J_2(\boldsymbol{w}^G)$} 
        \EndFor
        \Function{Client}{$\boldsymbol{\theta}^f, \boldsymbol{\theta}^a$, $\boldsymbol{w}^G$, \textcolor{blue}{$\boldsymbol{w}^D$}}
            \State $\boldsymbol{\theta}^f_k, \boldsymbol{\theta}^a_k$, $\boldsymbol{w}_k^G$,\textcolor{blue}{$\boldsymbol{w}_k^D$} $\gets \boldsymbol{\theta}^f, \boldsymbol{\theta}^a$,$\boldsymbol{w}^G$, \textcolor{blue}{$\boldsymbol{w}^D$}
            \For {i=1,2,...,$T_1$} 
                \State Update $E$: $\boldsymbol{\theta}_k^E \gets \boldsymbol{\theta}_k^E - \eta \nabla J_1(\boldsymbol{\theta}_k^E)-\beta \nabla J_5(\boldsymbol{\theta}_k^E)$
                \State Update $h^a$: $\boldsymbol{\theta}_k^a \gets \boldsymbol{\theta}_k^a - \eta \nabla J_1(\boldsymbol{\theta}_k^a)$ 
                \State Update $h^y$: $\boldsymbol{\theta}_k^y \gets \boldsymbol{\theta}_k^y - \beta \nabla J_5(\boldsymbol{\theta}_k^y)$  
            \EndFor
            \textcolor{blue}{\For {i=1,2,...,$T_2$}
                \State Update D: $\boldsymbol{w}_k^D \gets \boldsymbol{w}_k^D$ - $\gamma_D \nabla J_3(\boldsymbol{w}_k^D)$
                \State Update G: $\boldsymbol{w}_k^G \gets \boldsymbol{w}_k^G$ - $\gamma_G \nabla J_4(\boldsymbol{w}_k^G)$
            \EndFor}
            \State \Return {$\boldsymbol{\theta}^f_k,\boldsymbol{\theta}^a_k$},\textcolor{blue}{$\boldsymbol{w}_k^G, \boldsymbol{w}_k^D$}
        \EndFunction
    \end{algorithmic}
\end{algorithm}

\subsubsection{Conditional generator approach}
We implement our first method AFed-G based on a conditional generator trained on the server side. Denote the ground-truth global joint distribution $p(Z,A)$, our target is to learn the global conditional distribution $p(Z|A)$, where $Z=E(X)$ represents the latent feature. We approximate $p(Z|A)$ by finding a distribution $Q^*$ that satisfies
\begin{equation} \label{eq: ori_obj}
        Q^*=\operatorname*{arg\,max}_{Q} \mathbb{E}_{a\sim p(a),z\sim Q(z|a)}\left[\log p(a|z)\right], 
\end{equation}
where $p(a)$ and $p(a|z)$ are the ground-truth prior and posterior distribution of the sensitive attribute $a$, respectively. The target of Eq. \ref{eq: ori_obj} is to find a distribution $Q(z|a)$ from which we can sample a latent feature $z$ conditional on the given value $a$. The sampled $z$ should be correlated with the sensitive attribute $a$, \textit{i.e.}, maximizing $p(a|z)$.

However, $p(a|z)$ is unknown in FL. We estimated it by the sum of local distributions that are embedded in clients' classification head $h^a_k(\cdot; \boldsymbol{\theta}^{a}_k)$ \cite{zhu2021data}:
\begin{equation} \label{eq: dist_sum}
    p(a|z) \propto \sum^N_{k=1} \log p(a|z;\boldsymbol{\theta}_k^{a}).
\end{equation}
The uploaded models can provide the server with a global view of the data distribution without violating the privacy requirement of FL. Based on the relation of Eq. \ref{eq: dist_sum}, we approximate Eq. \ref{eq: ori_obj} as:
\begin{equation} \label{eq: AFed-G-opt}
        Q^*=\operatorname*{arg\,max}_{Q} \mathbb{E}_{a\sim p(a),z\sim Q(z|a)}\left[\sum^N_{k=1} \log p(a|z;\boldsymbol{\theta}_k^{a})\right].
\end{equation}

Solving Eq. \ref{eq: AFed-G-opt} is difficult, instead, we train a conditional generator $G(\cdot;\boldsymbol{w}^G)$ to learn the conditional distribution $Q(Z|A)$ to approximate the solution. The training objective is formulated as follows:
\begin{equation} \label{eq: optG}
    \mathop{\min \limits_{\boldsymbol{w}^G}}J_2(\boldsymbol{w}^G) \coloneqq \mathbb{E}_{a\sim Q(a),z\sim G(z|a;\boldsymbol{w}^G)}\sum^N_{k=1}\ell_{CE}\left[h^a_k(z;\boldsymbol{\theta}_k^{a}),a\right],
\end{equation}
The latent feature $\boldsymbol{z}$ generated by the optimal generator should be correctly classified as the pre-defined attribute $a$ by all clients' classification heads $h^a_k(\cdot; \boldsymbol{\theta}^{a}_k)$. This way, the knowledge about the local data distribution is distilled to $G(\cdot;\boldsymbol{w}^G)$ and forms a global data distribution.

The conditional generator has demonstrated the ability to extract knowledge about the local data distribution in FL \cite{zhu2021data}. However, a conditional generator trained with the classification feedback extracts knowledge about the decision boundary. This knowledge does not adequately describe the local data distribution of clients, especially those far from the decision boundary. Thus a conditional generator may fail to provide gains for those samples.

\subsubsection{Conditional GAN approach}
In our second method, we replace the conditional generator with a conditional GAN parameterized by $\boldsymbol{w} = \boldsymbol{w}^D \circ \boldsymbol{w}^G$. Compared to the first method, this incurs computation overhead but can receive more informative feedback from the discriminator. We train the generator $G$ and discriminator $D$ on the client side and then aggregate them on the server side. For the $k$-th client, the training objective function of $D$ is formed as:
\begin{equation}\label{eq: GAN_D}
    \begin{split}    \min_{\boldsymbol{w}^{D}_k}J_3(\boldsymbol{w}^{D}_k)\coloneqq\mathbb{E}_{a,z\sim P_k(z,a)} \left[\log(1-D_k(z;\boldsymbol{w}^{D}_k))\right. \\     +\left.\log(D_k(G_k(\epsilon,a;\boldsymbol{w}^{G}_k);\boldsymbol{w}^{D}_k)) \right],
    \end{split}
\end{equation}
where $p_k(z|a)$ is the target distribution, $\boldsymbol{w}^{G}_k$ and $\boldsymbol{w}^{D}_k$ are training parameters of the generator and discriminator, respectively. The objective function of $G$ is formed as:
\begin{equation} \label{eq: GAN_G}
    \begin{split}   \min_{\boldsymbol{w}^{G}_k}J_4(\boldsymbol{w}^{G}_k)\coloneqq\mathbb{E}_{a,z\sim G(z|a;\boldsymbol{w}^{G}_k)} \left[\log(1-D_k(z;\boldsymbol{w}^{D}_k) \right].   
    \end{split}
\end{equation}
After local training, both $\boldsymbol{w}^G_k$ and $\boldsymbol{w}^D_k$ are uploaded to the server for aggregation. Algorithm \ref{alg: AFed} describes the detailed procedure.
\subsection{Local Debiasing}
The second task is performed on the client side. It aims to train a fair and accurate classifier $f$ with the help of $G$. The classification model $f$ is trained on both true and mixed samples, and the objective function is defined as
\begin{equation}  \label{eq: base opt problem}
    \min_{\boldsymbol{\theta}^f_k} J(\boldsymbol{\theta}^f_k) \coloneqq  {\mathcal{L}}^{y}_k(\boldsymbol{\theta}^f_k) + \lambda  {\mathcal{L}}^{fair}_k(\boldsymbol{\theta}^f_k),
\end{equation}
where ${\mathcal{L}}^{y}_k(\boldsymbol{\theta}^f_k)$ and ${\mathcal{L}}^{fair}_k(\boldsymbol{\theta}^f_k)$ are model's classification loss and fairness violation loss, respectively. $\lambda$ controls the trade-off between the model's accuracy and fairness. A larger $\lambda$ indicates a greater emphasis on fairness, while a smaller $\lambda$ improves the model's overall accuracy at the expense of fairness. The empirical loss $\hat{\mathcal{L}}^{y}_k(\boldsymbol{\theta}^f_k)$ is obtained on  $\mathcal{D}_k$,
\begin{equation} \label{eq: opt f}
    \hat {\mathcal{L}}^{y}_k(\boldsymbol{\theta}^f_k) \coloneqq \frac{1}{|\mathcal{D}_k|}\sum_{\mathcal{D}_k}\ell_{CE}\left[ f(x_i;\boldsymbol{\theta}^f_k),y_i\right],
\end{equation} 
We focus on DP \cite{hardt2016equality} in this work, which requires the prediction $\hat Y$ to be independent of the sensitive attribute $A$. The straightforward idea is to treat the discrimination level as a regularizer \cite{madras2018learning}, i.e., ${\mathcal{L}}^{fair}_k(\boldsymbol{\theta}^f_k)\coloneqq\lambda \Delta DP(f)$. 

However, in FL, the data is partitioned among all clients, with each holding only a subset of the total. Debiasing with only local data will fail in global data distribution. Thus, we propose augmenting the local dataset with synthetic data. More specifically, for a real sample $(\boldsymbol{x}_t,y_t,a_0)$ from the training dataset $\mathcal{D}_k$, the corresponding latent feature is $z_t=E(\boldsymbol{x}_t)$, where $E$ is the feature extractor. We generate a fake latent feature with a different sensitive attribute $a_1$:
\begin{equation} \label{eq: genZ}
    z_f = G(\epsilon, a_1;\boldsymbol{w}^G),
\end{equation}
where $\epsilon \sim N(0, \boldsymbol{I})$ is the noise input. We mix the true feature with the fake one to get an augmented sample \cite{zhang2018mixup}:
\begin{equation}
    \bar {z} = t z_t + (1-t){z}_f,
\end{equation}
where $t \sim \textrm{Beta}(\alpha,\alpha)$, for $\alpha \in (0, \infty)$. 

One assumption made in the mixup is that feature linear interpolations can be translated into label linear interpolations. Incorporating this kind of prior knowledge implicitly helps to broaden the model's training data distribution and, in turn, its ability to make accurate predictions on new data. Similarly, the sensitive attribute of $\bar {z}$ can be assumed as $\bar a = t a_0 +(1-t) a_1$. To ensure fairness, the model's prediction should be invariant to the interpolation degree $t$ and consequently the intermediate value of sensitive attribute \cite{chuang2020fair}:
\begin{equation} \label{eq: fair constraint}
  \frac{d}{dt} \mathbb{E}_{\boldsymbol{z}_t,\boldsymbol{z}_f}\left[ f\left(\bar {z}\right)\right] \equiv 0.
\end{equation}
The penalized optimization problem is now formulated as follows:
\begin{equation} \label{eq: optf}
 \min_{\boldsymbol{\theta}^f_k} J_5(\boldsymbol{\theta}^f_k) \coloneqq  {\mathcal{L}}^{y}_k(\boldsymbol{\theta}^f_k) + \lambda\mathbb{E}_{z_t,z_f}\left[\langle \nabla_z f(\bar{z}), z_t-z_f \rangle\right].
\end{equation}
By minimizing Eq. \ref{eq: optf}, we are optimizing the model $\boldsymbol{\theta}^f_k$ to, on the one hand, fit the local data distribution of dataset $\mathcal{D}_k$, and on the other hand, regularize the model's disparate predictions on the mixed distribution of $z^t$ and $z^f$.

\begin{figure*}
    \begin{subfigure}[b]{\columnwidth}
        \includegraphics[width=\textwidth]{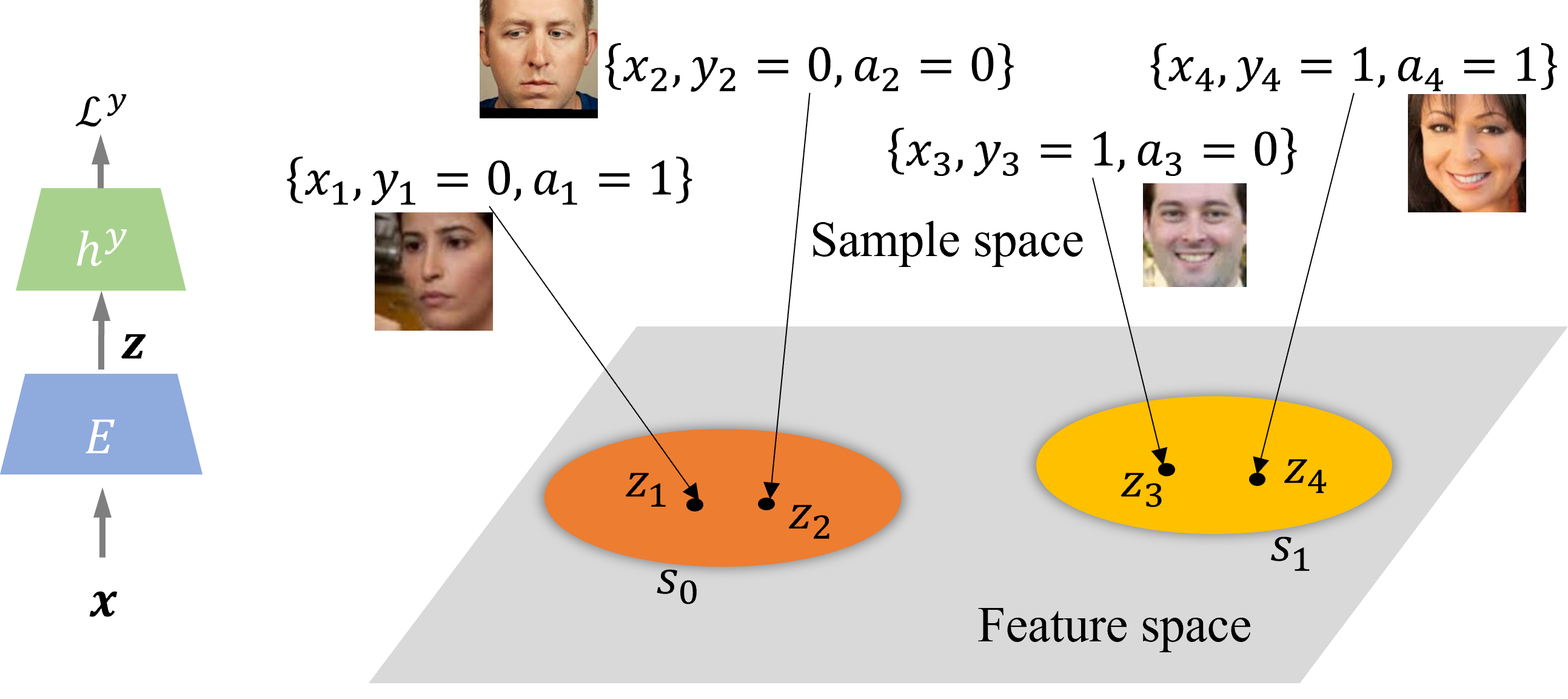}
        \caption{Training with one classification head}
        \label{fig: tsch}
    \end{subfigure}  
    \hfill
    \begin{subfigure}[b]{\columnwidth}
        \includegraphics[width=\textwidth]{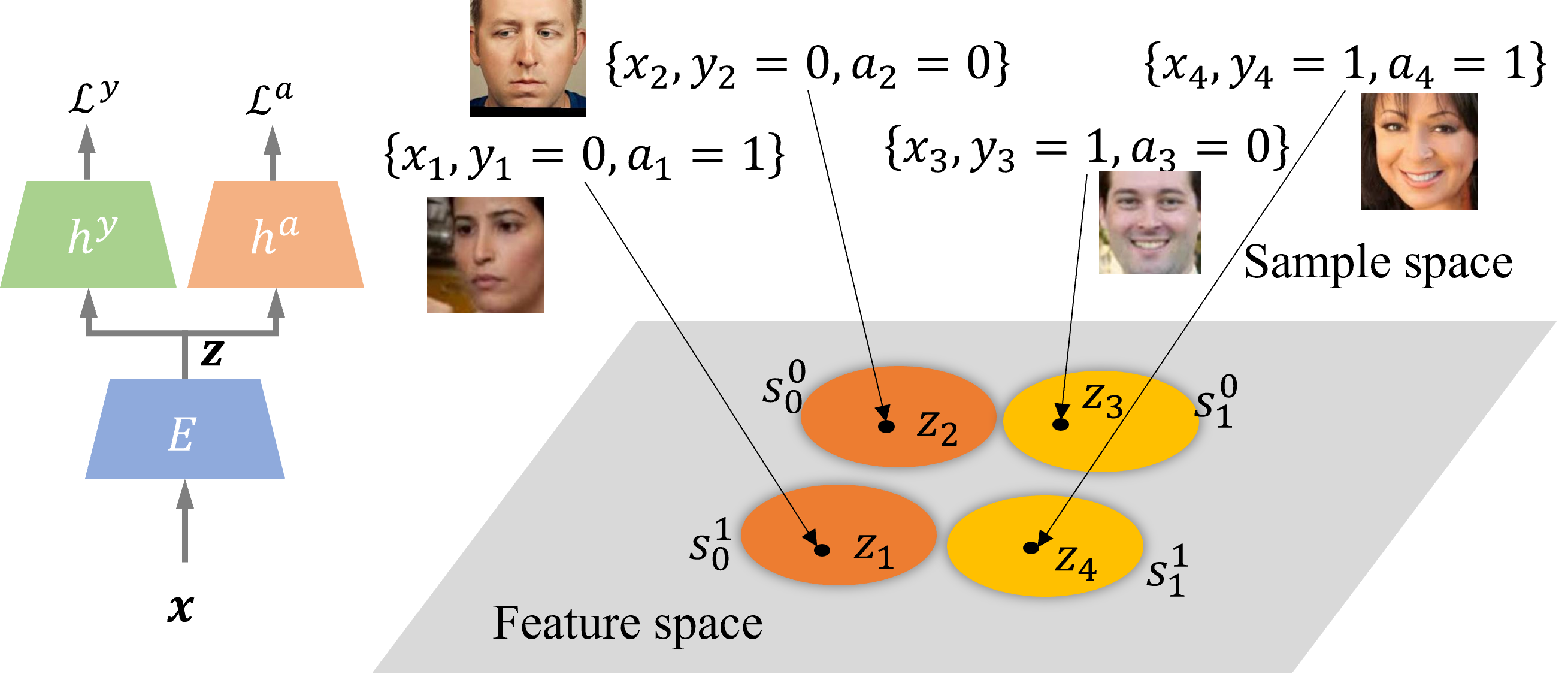}
        \caption{Training with two classification heads}
        \label{fig: ttch}
    \end{subfigure} 
    \caption{(a) One classification head case, the extractor $E_k$ is trained solely with the feedback of $h^y_k$. Samples with the same label $y=1$ but different attributes $a=0$ and $a=1$ are mapped to the same area in latent space. (b) Two classification heads case, $E_k$ is now trained with both heads' feedback. Samples are better separated by $y$ and $a$.}
\end{figure*}

\section{Analysis}
This section presents our analysis. We begin by showing $h^a$ is necessary for the extractor to learn informative features. Then, we analyze the drawbacks of AFed-G. In the third part of this section, we thoroughly analyzed the fairness guarantee of our framework. First, we demonstrate that debiasing with local data is futile. Next, we show that generated samples improve the local model's fairness performance on global data distribution. Since Rasou et al. \cite{rasouli2020fedgan} have proven the convergence of GAN training in FL with non-i.i.d. data, we omit the convergence analysis of AFed-GAN.

\subsection{Feature Extractor Analysis} 
Fig. \ref{fig: tsch} shows the one classification head case. The resultant extractor $E$ tends only to retain information pertinent to label $y$ and compresses all other information. $\{x_1,y_1=0,a_1=1\}$ and $\{x_2,y_2=0,a_2=0\}$ are mapped to the same area due to the same label $y=1$ despite having different attribute values $a$. This brings confusion when training conditional GAN in this feature space. 

When training conditional GAN, the generated sample should resemble the training data. In other words, the fake samples should be close to the real ones in the feature space. Near $z_2$, within the area $s_0$, there are samples with attribute values $a=0$ and $a=1$. As a result, feeding the generator with attribute value $a=0$ will produce samples similar to group $a=0$ or group $a=1$, as depicted in Fig. \ref{fig: UMAP celeba tsch}. The same holds for the group $a=1$. 

To address this, we propose adding an auxiliary classification head $h^a$ to provide more feedback for the extractor as shown in Fig.\ref{fig: ttch}. With the guidance of $h_y$ and $h^a$, the extracted features contain information about $y$ and $a$. Samples from groups defined by $y$ and $a$ are mapped to four distinct feature space regions. On this basis, we can train a generator capable of generating samples conditional on $a$.

Fig. \ref{fig: UMAP of celeba} shows the results of the single and two classification head cases. The inputs to $E$ are sampled from the CelebA dataset. As shown in Fig. \ref{fig: UMAP celeba tsch}, the generated data fails to match the real data. For $a=0$, the generated features (purple cross) overlap with the real features with attributes $a=1$ and $a=0$. As a comparison, Fig. \ref{fig: UMAP celeba ttch} demonstrates that the auxiliary feedback signal from $h^a$ significantly improves generator performance. The extracted features are divided into four parts, corresponding to the four groups defined by $y$ and $a$. Since the features are well-separated, the GAN can learn the maps from $a$ to $\boldsymbol{z}$ flawlessly.  

\begin{figure} 
    \centering
    \begin{subfigure}[b]{0.2455\textwidth}
        \includegraphics[width=\textwidth]{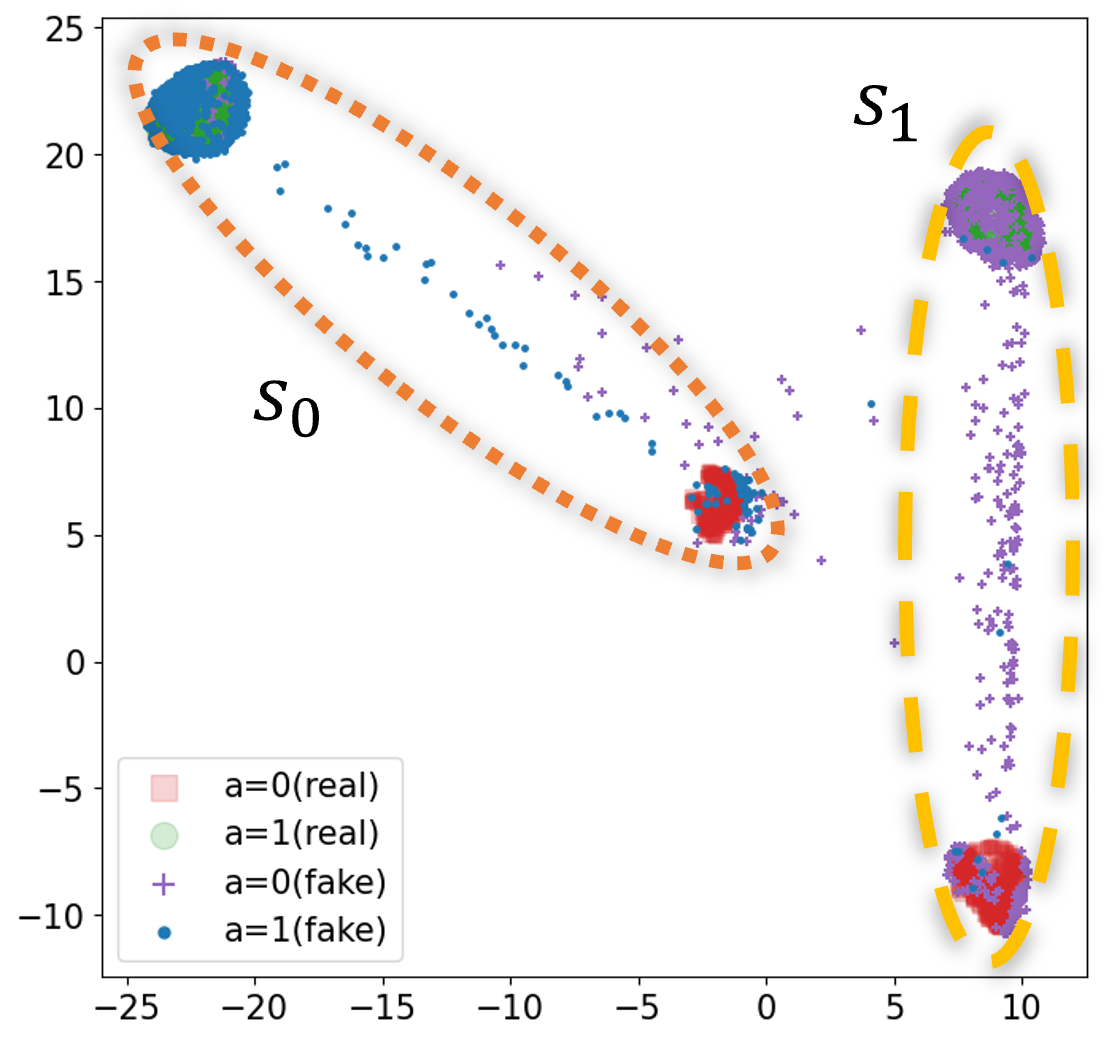}
        \caption{}
        \label{fig: UMAP celeba tsch}
    \end{subfigure}
    \hfill
    \begin{subfigure}[b]{0.2345\textwidth}
         \includegraphics[width=\textwidth]{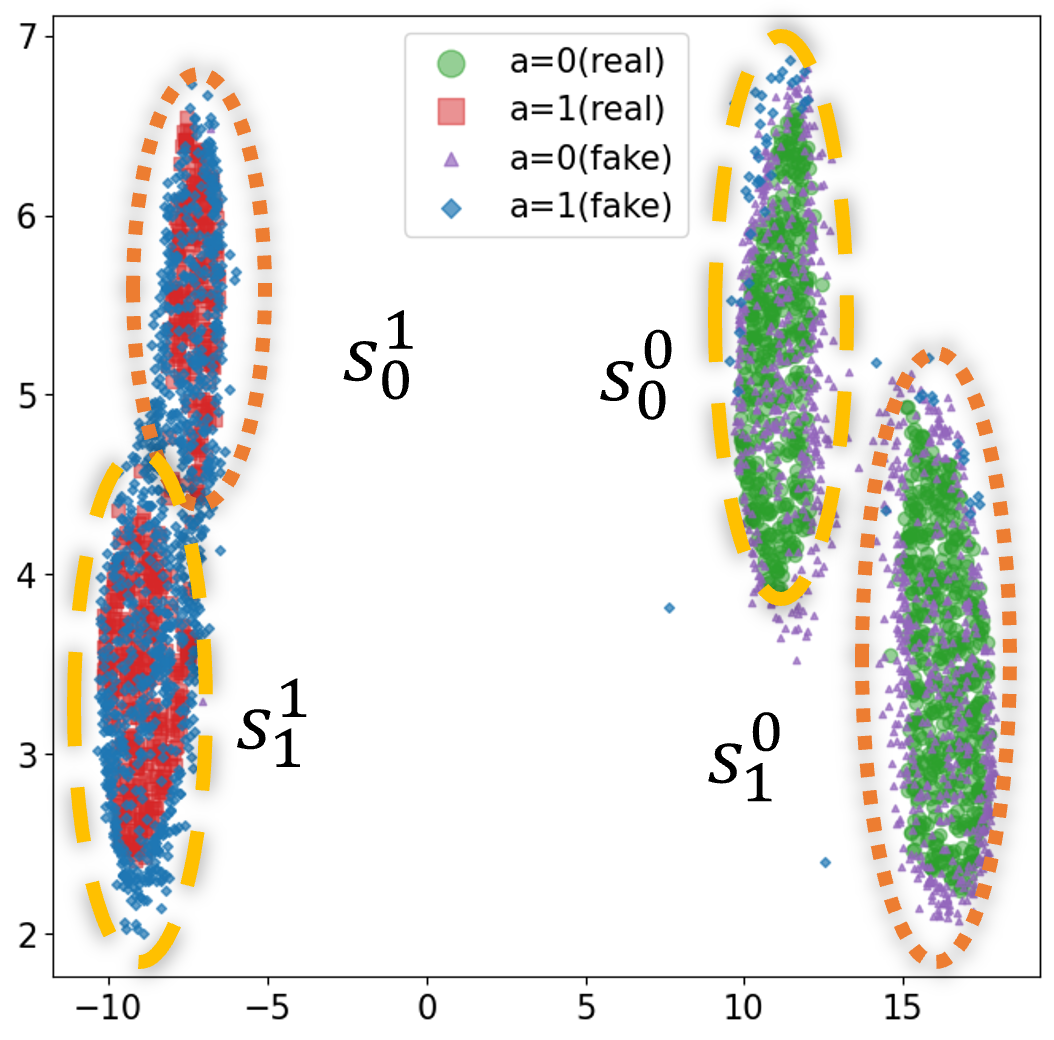}
        \caption{}
        \label{fig: UMAP celeba ttch}   
    \end{subfigure}
    \caption{The UMAP \cite{mcinnes2018umap} of fake and real features. (a) One classification head case; (b) Two classification heads case.}
    \label{fig: UMAP of celeba} 
\end{figure}

\subsection{Generator Analysis}
Our first method AFed-G trains a conditional generator $G$ to approximate the $P(Z|A)$ using the knowledge embedded in the classification head $\{\boldsymbol{\theta}_k^a\}_{k=1}^{|S|}$, i.e. $p_k(a|z)$. The second method, AFed-GAN, is more direct. A discriminator guides training to train a conditional GAN to learn the distribution $p_k(z|a)$. The former enjoys simplicity (only a generator is added) and can be carried out on the server's side, which reduces the client's training and communication load. In comparison, the latter performs better in learning the local data distribution.  

Since introduced in \cite{goodfellow2014generative}, numerous researchers have successfully implemented GAN to learn the target distribution \cite{mao2019mode, mirza2014conditional}. In the context of FL, \cite{rasouli2020fedgan} demonstrates the convergence of GAN training in FL with non-i.i.d. data. Here, we focus on analyzing the shortcomings of the AFed-G approach.
\begin{theorem}
    Given the optimal classification head $h^y$, optimizing Eq. \ref{eq: AFed-G-opt} is equivalent to
    \begin{equation}
        \min_Q \textrm{KL}\left(Q_{A,Z}\|\prod\limits_{k=1}^N{P^k_{A,Z}}\right) - \textrm{KL}\left(Q_{Z}\|\prod\limits_{k=1}^N{P^k_{Z}}\right)+H_Q(A\|Z) 
    \end{equation}
    where $\prod\limits_{k=1}^N{P^k_{A,Z}}$ and $\prod\limits_{k=1}^N{P^k_{Z}}$ are the joint probability distribution of clients' data. $H_Q(A\|Z)=\mathbb{E}_{a,z\sim Q_{A,Z}}[\log q(a|z)]$ is the conditional entropy of the generated samples.
\end{theorem}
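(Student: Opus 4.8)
The plan is to unfold the objective in Eq.~\ref{eq: AFed-G-opt} using Bayes' rule and then reassemble the resulting log-expectations into the three information-theoretic quantities in the statement. First I would invoke the optimality assumption on the classification head to identify each client's predictive distribution with the true local posterior, i.e. $p(a\mid z;\boldsymbol{\theta}^a_k)=P^k(a\mid z)=P^k_{A,Z}(a,z)/P^k_Z(z)$. This is what makes the head's output a genuine conditional density rather than an arbitrary score, and it is the only place the hypothesis is used. Substituting into the summand and applying $\sum_k\log=\log\prod_k$ turns the integrand $\sum_k\log p(a\mid z;\boldsymbol{\theta}^a_k)$ into $\log\prod_k P^k_{A,Z}(a,z)-\log\prod_k P^k_Z(z)$, so the objective becomes
\begin{equation}
\mathbb{E}_{Q_{A,Z}}\!\left[\log\prod_{k=1}^N P^k_{A,Z}\right]-\mathbb{E}_{Q_{A,Z}}\!\left[\log\prod_{k=1}^N P^k_Z\right],
\end{equation}
where the expectation is taken against the joint $Q_{A,Z}(a,z)=p(a)\,Q(z\mid a)$ induced by the sampling scheme.

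Next I would manufacture the KL terms by adding and subtracting the $Q$-log-densities. Writing $\mathbb{E}_{Q_{A,Z}}[\log\prod_k P^k_{A,Z}]=\mathbb{E}_{Q_{A,Z}}[\log q(a,z)]-\textrm{KL}(Q_{A,Z}\|\prod_k P^k_{A,Z})$ and, since $\prod_k P^k_Z$ depends on $z$ alone, $\mathbb{E}_{Q_{A,Z}}[\log\prod_k P^k_Z]=\mathbb{E}_{Q_Z}[\log q(z)]-\textrm{KL}(Q_Z\|\prod_k P^k_Z)$, the objective collapses to $\mathbb{E}_{Q_{A,Z}}[\log q(a,z)-\log q(z)]-\textrm{KL}(Q_{A,Z}\|\prod_k P^k_{A,Z})+\textrm{KL}(Q_Z\|\prod_k P^k_Z)$. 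The leading difference is exactly $\mathbb{E}_{Q_{A,Z}}[\log q(a\mid z)]$, i.e. the conditional-entropy term $H_Q(A\|Z)$. Flipping the sign to convert the $\arg\max$ into an $\arg\min$ then yields the claimed expression.

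The main obstacle I anticipate is bookkeeping around normalization: the products $\prod_k P^k_{A,Z}$ and $\prod_k P^k_Z$ are not probability measures (they do not integrate to one), so the $\textrm{KL}$ symbols must be read as generalized divergences to unnormalized measures. I would argue that the omitted log-partition constants are independent of $Q$ and therefore do not change the $\arg\min$, which is all that ``equivalent optimization'' requires; this also lets me drop them cleanly rather than carry them through. A secondary point to verify carefully is the convention for $H_Q(A\|Z)$: tracing $\mathbb{E}_{Q_{A,Z}}[\log q(a\mid z)]$ through the max-to-min flip must land on the sign appearing in the statement, so I would fix the conditional-entropy convention at the outset to keep the final accounting consistent.
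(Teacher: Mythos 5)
Your proposal is correct and follows essentially the same route as the paper's own proof: identify $p(a\mid z;\boldsymbol{\theta}^a_k)$ with $P^k_{A,Z}/P^k_Z$ via Bayes' rule, turn the sum of logs into log-products, add and subtract the $Q$ log-densities to assemble the two KL terms and the residual $\mathbb{E}_{Q_{A,Z}}[\log q(a\mid z)]$, and flip the max to a min. Your two side remarks are sound refinements the paper glosses over: the products $\prod_k P^k$ are indeed unnormalized (so the KLs are generalized divergences, with $Q$-independent constants), and the derivation lands on $-\mathbb{E}_{Q_{A,Z}}[\log q(a\mid z)]$, i.e.\ the standard conditional entropy, indicating the sign in the paper's stated definition of $H_Q(A\|Z)$ is a typo.
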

\begin{proof}
Let's denote $p_k(a|z) \coloneqq p(a|z;\boldsymbol{\theta}_k^a)$ the likelihood of client $k$'s local data distribution, which is embedded in the classification head $\boldsymbol{\theta}_k^a$. Eq. \ref{eq: AFed-G-opt} aims to find the target distribution $Q^*$. Note we lose $\boldsymbol{\theta}_k^a$ for simplicity.
\begin{align*}
    & \max_Q \mathbb{E}_{a\sim Q(a)}\mathbb{E}_{z\sim Q(z|a)}\sum_{k=1}^N{\log p_k(a|z)} \\
    = & \max_Q \mathbb{E}_{a\sim Q(a)}\mathbb{E}_{z\sim Q(z|a)}\left[\sum_{k=1}^N{\left[\log \frac{p_k(a,z)}{p_k(z)}\right] }  + \log \frac{q(a,z)}{q(a,z)} \right] \\
    = & \min_Q \mathbb{E}_{a\sim Q(a)}\mathbb{E}_{z\sim Q(z|a)}\left[\log q(a,z) - \log{\prod\limits_{k=1}^N{ p_k(a,z)}} \right.\\
     & - \left. \log q(z) + \log{\prod\limits_{k=1}^N{p_k(z)}} -\log q(a|z)  \right] \\
    = & \min_Q \textrm{KL}\left(Q_{A,Z}\|\prod\limits_{k=1}^N{P^k_{A,Z}}\right) - \textrm{KL}\left(Q_{Z}\|\prod\limits_{k=1}^N{P^k_{Z}}\right) \\
    & + H_Q(A\|Z). 
\end{align*}
\end{proof}

Optimizing Eq. \ref{eq: AFed-G-opt} comprises three terms. The first minimizes the KL divergence between the conditional generator $G$ distribution and the ground-truth joint distribution of $N$ clients, which is our target, matching the $Q_{A,Z}$ with $\prod\limits_{k=1}^N{P^k_{A,Z}}$. However, the second term maximizes the KL divergence between the marginal generator $Q_{Z}$ and target joint distributions $\prod\limits_{k=1}^N{P^k_{Z}}$. This contradicts our goal and will degrade the generator's performance by pushing the marginal distribution away from the ground truth distribution. The last term is the conditional entropy of label $A$ of data sampled from the generator distribution $Q$. Minimizing the last term will decrease the data's diversity by making generated $\boldsymbol{z}$ be classified as $A$ with higher confidence, i.e., away from the decision boundary. This is in accordance with our intuition that only the feedback from classification head $\boldsymbol{\theta}_k^a$ is available. This is to say, only information related to the decision boundary is provided, and the model will learn to generate "safe" examples that can be correctly classified by $\boldsymbol{\theta}_k^a$. As a result, the generated data may fail to distribute in the space spanned by real data. 

\subsection{Fairness Analysis}
\subsubsection{Debiasing with local data is ineffective}
Our analysis borrows the idea from the field of domain adaptation. Each client's data is associated with a source domain $\mathcal{D}_k$ from which local data is sampled. The goal is to train a model that performs well on the global (or target) domain, which is the ensemble of all source domains $\mathcal{D}_g = \sum_{k=1}^N \mathcal{D}_k$. 

We first introduce a metric to measure the local and global distribution differences. We use the $\mathcal{H}$-divergence  $d_{\mathcal{H}\Delta \mathcal{H}}$ \cite{ben2010theory} to measure the difference between two probabilities,
\begin{equation}
    d_{\mathcal{H}\Delta \mathcal{H}}(\mathcal{D}_k, \mathcal{D}_g)=2\sup_{h\in \mathcal{H}}\left|\Pr_{\mathcal{D}_k}[I(h)]-\Pr_{\mathcal{D}_g}[I(h)]\right|,
\end{equation}
where $\mathcal{H}$ is the hypothesis space, $I(h)$ is a set of samples $x$ which is defined as $I(h)=\left\{x|h(x)=1\right\}$. 
\begin{lemma}\label{lemma: ben}\cite{ben2006analysis}
    Let $\mathcal{H}$ be a hypothesis space of VC dimension $d$, $\mathcal{U}$ and $\mathcal{U}'$ are samples of size $m$, sampled from $\mathcal{D}$ and $\mathcal{D}'$, respectively. $d_{\mathcal{H}\Delta \mathcal{H}}(\mathcal{D}, \mathcal{D}')$ is the $\mathcal{H}$-divergence between $\mathcal{D}$ and $\mathcal{D}'$ and $\hat d_{\mathcal{H}\Delta \mathcal{H}}(\mathcal{D}, \mathcal{D}') $ is the empirical distance computed on $\mathcal{U}$ and $\mathcal{U}'$. Then for any $\delta \in (0,1)$, with probability at least 1-$\delta$,
    \begin{equation} \label{eq: h divergence gap}
        d_{\mathcal{H}\Delta \mathcal{H}}(\mathcal{D}, \mathcal{D}') \leq \hat d_{\mathcal{H}\Delta \mathcal{H}}(\mathcal{D}, \mathcal{D}')+4\sqrt{\frac{d\log(2m)+\log(\frac{2}{\delta})}{m}}
    \end{equation}
\end{lemma}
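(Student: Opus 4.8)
The plan is to recognize Eq.~\ref{eq: h divergence gap} as an instance of the classical Vapnik--Chervonenkis uniform convergence bound, specialized to the collection of events induced by the hypothesis class. The only quantity distinguishing $d_{\mathcal{H}\Delta\mathcal{H}}(\mathcal{D},\mathcal{D}')$ from its empirical counterpart $\hat d_{\mathcal{H}\Delta\mathcal{H}}(\mathcal{D},\mathcal{D}')$ is that the true measures $\Pr_{\mathcal{D}}$, $\Pr_{\mathcal{D}'}$ are replaced by the empirical frequencies on the samples $\mathcal{U}$, $\mathcal{U}'$. Hence the whole task reduces to controlling, uniformly over the set system $\mathcal{I}=\{I(h):h\in\mathcal{H}\}$, the deviation of empirical frequencies from true probabilities on each of the two samples, and then recombining the two one-sample controls through the definition of the divergence.

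First I would fix the set system $\mathcal{I}$ and note that, because $I(h)=\{x:h(x)=1\}$ ranges over a class whose VC dimension equals that of $\mathcal{H}$, namely $d$, Sauer's lemma bounds the number of dichotomies realizable on a double sample of size $2m$ by $\sum_{i=0}^{d}\binom{2m}{i}\le (2m)^{d}$. This polynomial growth-function bound is exactly what produces the $d\log(2m)$ term inside the square root; everything else in the rate comes from the concentration step.

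Next I would apply the two-sided VC uniform convergence theorem to each sample separately: the standard route is the symmetrization (ghost-sample) argument, followed by a union bound over the at-most-$(2m)^{d}$ dichotomies together with Hoeffding's inequality, which yields a tail of the form $4\,(2m)^{d}\exp(-m\epsilon^{2}/8)$ for $\sup_{A\in\mathcal{I}}|\Pr_{\mathcal{D}}[A]-\hat{\Pr}_{\mathcal{U}}[A]|$, and symmetrically for $\mathcal{D}'$ and $\mathcal{U}'$. Setting each tail equal to $\delta/2$ and solving for $\epsilon$ gives the displayed rate. To assemble the final bound I would write $d_{\mathcal{H}\Delta\mathcal{H}}=2\sup_{A\in\mathcal{I}}|\Pr_{\mathcal{D}}[A]-\Pr_{\mathcal{D}'}[A]|$, add and subtract the two empirical frequencies inside the supremum, and use the triangle inequality so that $|d_{\mathcal{H}\Delta\mathcal{H}}-\hat d_{\mathcal{H}\Delta\mathcal{H}}|$ is bounded by twice the sum of the two one-sample deviations; a union bound over the two $1-\delta/2$ events and absorption of the factor $2$ from the definition of the divergence into the constant then yields the constant $4$ and the claim with overall probability $1-\delta$.

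The hard part will be the VC uniform convergence step itself---specifically the symmetrization argument and the passage from the growth function to an explicit exponential tail---since everything else (the reduction to a set system, Sauer's lemma, and the triangle-inequality bookkeeping) is routine. Matching the exact constant $4$ and the precise form $d\log(2m)+\log(2/\delta)$ requires care in how the failure budget $\delta$ is split between the two samples and in tracking the factor of $2$ from the definition of the divergence; since this is the cited result of \emph{ben2006analysis}, it suffices to invoke its statement, but the sketch above is the derivation I would reconstruct if needed.
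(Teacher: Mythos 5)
The paper does not actually prove this lemma: it is imported verbatim, constants included, from the cited work \cite{ben2006analysis}, so the paper's ``proof'' is exactly the fallback you invoke in your last sentence. Your reconstruction of where the bound comes from is nonetheless sound at the structural level, and it is the standard one: the set system $\{I(h):h\in\mathcal{H}\}$ inherits VC dimension $d$, Sauer's lemma bounds the dichotomies on a double sample by roughly $(2m)^{d}$ (which is the source of the $d\log(2m)$ term), and symmetrization plus Hoeffding plus a union bound gives uniform convergence of empirical frequencies at rate $\sqrt{(d\log(2m)+\log(1/\delta))/m}$. The one concrete weakness is your assembly step: controlling each sample separately at level $\delta/2$ with the classical tail $4\,\Pi(2m)e^{-m\epsilon^{2}/8}$, then recombining through the triangle inequality and the factor $2$ in the definition of $d_{\mathcal{H}\Delta\mathcal{H}}$, gives a constant of order $8\sqrt{2}$ with $\log(8/\delta)$ inside the square root, not the stated $4$ and $\log(2/\delta)$. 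The exact constants in the lemma come from the sharper argument of Kifer et al.\ on which \cite{ben2006analysis} relies, which applies symmetrization directly to the two-sample statistic $\sup_{A}\bigl|\hat{\Pr}_{\mathcal{U}}[A]-\hat{\Pr}_{\mathcal{U}'}[A]\bigr|$ rather than splitting it into two one-sample deviations. You flag this discrepancy yourself, so it is a known looseness rather than an error, but as written your sketch establishes the inequality only up to constants; for the literal statement, the citation is doing real work, just as it does in the paper. (A minor further point: the lemma is phrased for $d_{\mathcal{H}\Delta\mathcal{H}}$ while using the VC dimension $d$ of $\mathcal{H}$ itself; strictly, the symmetric-difference class has VC dimension up to $2d$, which is why the paper's Theorem \ref{theorem: fairness upper bound} carries a $2d\log(2m)$ term. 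Your identification of the set system follows the lemma as stated, so this conflation is the paper's, not yours.)
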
 
Now we are ready to bound the fairness violation that a hypothesis $h_k$ trained with the local model can have on global data, i.e., $\Delta \textrm{DP}_{\mathcal{D}}(h_k)$. We borrow the idea from the field of domain adaptation. Each client's data can be viewed as a source domain $\mathcal{D}_k$, and the goal is to train a global model that performs well on the global domain, which is the ensemble of all source domains $\mathcal{D}=\sum_{k=1}^{N}\mathcal{D}_k$.
\begin{theorem}\cite{schumann2019transfer} \label{theorem: fairness upper bound}
    Let $\mathcal{H}$ be a hypothesis space of VC dimension $d$ with a ground truth labeling function $f: \mathcal{D} \mapsto \mathcal{Y}$, $h_k\in \mathcal{H}$ is a local model trained on local dataset $\mathcal{D}_k$. Then for any $\delta \in (0,1)$, with probability at least 1-$\delta$ (over the choice of samples), the discrimination level of $h_k$ on $\mathcal{D}$ is bounded by
    \begin{equation} \label{eq: fairness generalization}
    \begin{split}
        \Delta \text{DP}_{\mathcal{D}_g}\left(h_k\right) \leq \frac{1}{2}d_{\mathcal{H}\Delta \mathcal{H}}\left(\mathcal{U}_g^0,\mathcal{U}_k^0\right) + \frac{1}{2}{d_{\mathcal{H}\Delta \mathcal{H}}}\left(\mathcal{U}_g^1,\mathcal{U}_k^1\right) + {\lambda^0_k} \\
        + {\lambda^1_k} + \Delta\text{DP}_{\mathcal{D}_k}\left(h_k\right) + 8\sqrt{\frac{2d\log (2m) + \log(2/d)}{m}},  
    \end{split}
    \end{equation}
    where $\mathcal{D}_g^a=\left\{X\sim \mathcal{D}_g| A=a\right\}$ is the subdomain defined by attribute $a$, similar as $\mathcal{D}_k^a$. $\lambda^a_k=\min_{h\in\mathcal{H}} \epsilon_{\mathcal{D}_g^a}(h_k,f)+\epsilon_{\mathcal{D}^a_k}\left(h_k,f \right)$ is the optimal risk a hypothesis $h_k$ can have on the union of domain $\mathcal{D}^a_g$ and $\mathcal{D}^a_k$. $\epsilon_{\mathcal{D}_g^a}(h_k,f)\coloneqq \mathbb{E}_{x\sim \mathcal{D}_g^a}[|h(x)- f(x)|]$ is the empirical loss of $h_k$ on $\mathcal{D}^a_g$, similar as $\epsilon_{\mathcal{D}_g^a}(h_k,f)$.
\end{theorem}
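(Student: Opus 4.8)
The plan is to reduce the global demographic-parity violation of the local model $h_k$ to its local violation plus a per-group domain-shift penalty, following the Ben-David domain-adaptation template that Lemma~\ref{lemma: ben} is built to feed. Writing $\gamma^a_{\mathcal{D}}(h_k)=\Pr_{x\sim\mathcal{D}^a}[h_k(x)=1]$ for the positive-prediction rate restricted to the attribute value $a$, I would begin from $\Delta\text{DP}_{\mathcal{D}_g}(h_k)=|\gamma^0_{\mathcal{D}_g}(h_k)-\gamma^1_{\mathcal{D}_g}(h_k)|$ and splice in the corresponding local rates by the triangle inequality,
\[
\Delta\text{DP}_{\mathcal{D}_g}(h_k)\le\bigl|\gamma^0_{\mathcal{D}_g}(h_k)-\gamma^0_{\mathcal{D}_k}(h_k)\bigr|+\bigl|\gamma^0_{\mathcal{D}_k}(h_k)-\gamma^1_{\mathcal{D}_k}(h_k)\bigr|+\bigl|\gamma^1_{\mathcal{D}_k}(h_k)-\gamma^1_{\mathcal{D}_g}(h_k)\bigr|.
\]
The middle term is exactly $\Delta\text{DP}_{\mathcal{D}_k}(h_k)$, so the task collapses to bounding, for each group $a\in\{0,1\}$, the cross-domain gap $|\gamma^a_{\mathcal{D}_g}(h_k)-\gamma^a_{\mathcal{D}_k}(h_k)|$ between the global and local positive rates on that subpopulation.

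The core step is to establish $|\gamma^a_{\mathcal{D}_g}(h_k)-\gamma^a_{\mathcal{D}_k}(h_k)|\le\tfrac12 d_{\mathcal{H}\Delta\mathcal{H}}(\mathcal{D}_g^a,\mathcal{D}_k^a)+\lambda^a_k$. Here I would exploit the identity $\gamma^a_{\mathcal{D}}(h_k)=1-\epsilon_{\mathcal{D}^a}(h_k,\mathbf{1})$, where $\mathbf{1}$ is the constant positive hypothesis and $\epsilon_{\mathcal{D}^a}(\cdot,\cdot)$ is the $0/1$ disagreement already used in the statement; this turns the positive-rate gap into a difference of disagreement errors, $\gamma^a_{\mathcal{D}_g}(h_k)-\gamma^a_{\mathcal{D}_k}(h_k)=\epsilon_{\mathcal{D}_k^a}(h_k,\mathbf{1})-\epsilon_{\mathcal{D}_g^a}(h_k,\mathbf{1})$. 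I would then run the standard Ben-David manipulation: introduce the ideal joint hypothesis $h^*_a=\operatorname*{arg\,min}_{h\in\mathcal{H}}\epsilon_{\mathcal{D}_g^a}(h,f)+\epsilon_{\mathcal{D}_k^a}(h,f)$ attaining $\lambda^a_k$, add and subtract it inside the disagreements, use the triangle inequality of the disagreement pseudometric to peel off the $\lambda^a_k$ contribution, and invoke the defining property of $d_{\mathcal{H}\Delta\mathcal{H}}$ — that the disagreement of any two hypotheses in $\mathcal{H}$ differs between $\mathcal{D}_g^a$ and $\mathcal{D}_k^a$ by at most $\tfrac12 d_{\mathcal{H}\Delta\mathcal{H}}(\mathcal{D}_g^a,\mathcal{D}_k^a)$ — to absorb the shift.

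Finally, I would pass from the population divergences to the empirical ones appearing in the statement by applying Lemma~\ref{lemma: ben} to each pair $(\mathcal{D}_g^a,\mathcal{D}_k^a)$, noting that the symmetric-difference class $\mathcal{H}\Delta\mathcal{H}$ has VC dimension at most $2d$ (hence the $2d$ inside the radical) and taking a union bound over the two groups so that both per-group inequalities hold simultaneously with probability $1-\delta$ after rescaling the confidence. Summing the two per-group bounds with the already-isolated local term $\Delta\text{DP}_{\mathcal{D}_k}(h_k)$ and collecting the two sample-complexity contributions into the single $8\sqrt{\cdot}$ term yields the stated inequality.

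The step I expect to be the main obstacle is the per-group bound of the second paragraph. The subtlety is that a positive rate is a \emph{linear} functional of $h_k$ rather than a genuine $0/1$ disagreement, so a careless argument only surfaces the one-sided $\mathcal{H}$-divergence and never the symmetric $d_{\mathcal{H}\Delta\mathcal{H}}$ or the $\lambda^a_k$ term. Rewriting the gap through the reference hypothesis $\mathbf{1}$ and routing the comparison through the optimal joint predictor $h^*_a$ — while keeping every disagreement measured against the same ground-truth labeling function $f$, so that the residual errors legitimately collapse into $\lambda^a_k$ — is the delicate part; the initial reduction and the concentration step are then routine given Lemma~\ref{lemma: ben}.
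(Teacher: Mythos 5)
Your high-level architecture (isolate $\Delta\text{DP}_{\mathcal{D}_k}(h_k)$ via a triangle inequality on group-wise positive rates, bound the two cross-domain rate gaps, then pass to empirical divergences via Lemma \ref{lemma: ben}) parallels the paper's, but your core per-group inequality, $|\gamma^a_{\mathcal{D}_g}(h_k)-\gamma^a_{\mathcal{D}_k}(h_k)|\le\tfrac12 d_{\mathcal{H}\Delta\mathcal{H}}(\mathcal{D}_g^a,\mathcal{D}_k^a)+\lambda^a_k$, is not provable by the mechanism you describe; under the standard symmetric-difference reading of $d_{\mathcal{H}\Delta\mathcal{H}}$ it is simply false. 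Take $\mathcal{H}=\{h_k,\,1-h_k\}$: every disagreement set of a pair of hypotheses in $\mathcal{H}$ is either empty or the whole space, so $d_{\mathcal{H}\Delta\mathcal{H}}(\cdot,\cdot)\equiv 0$ for any two distributions; take $f=h_k$, so $\lambda^a_k=0$; and let $\mathcal{D}_g^a$, $\mathcal{D}_k^a$ be point masses at points where $h_k$ outputs $1$ and $0$ respectively. Then the rate gap equals $1$ while your claimed bound is $0$. The routing through $h^*_a$ cannot repair this: after splicing $h^*_a$ into $\epsilon_{\mathcal{D}^a}(h_k,\mathbf{1})$ you are left with the residual $\epsilon_{\mathcal{D}_k^a}(h^*_a,\mathbf{1})-\epsilon_{\mathcal{D}_g^a}(h^*_a,\mathbf{1})$, which is again a single-hypothesis rate gap (the problem has merely migrated from $h_k$ to $h^*_a$), together with disagreement terms $\epsilon_{\mathcal{D}^a}(h_k,h^*_a)$ whose conversion through $f$ produces accuracy terms $\epsilon_{\mathcal{D}_g^a}(h_k,f)$, $\epsilon_{\mathcal{D}_k^a}(h_k,f)$ that appear nowhere in the theorem and do not collapse into $\lambda^a_k$ (which is the risk of the \emph{best} hypothesis, not of $h_k$). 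The obstruction is structural: $d_{\mathcal{H}\Delta\mathcal{H}}$ controls how two domains weight disagreement sets of \emph{pairs} of hypotheses, and $\lambda^a_k$ controls how well the best hypothesis fits $f$; neither controls how two domains weight the single set $\{x:h_k(x)=1\}$.

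The paper's proof never has to confront a bare rate gap. It first rewrites the violation in error form, $\Delta\text{DP}_{\mathcal{D}}(h_k)=\epsilon_{\mathcal{D}^0}(h_k,f)+\epsilon_{\mathcal{D}^1}(1-h_k,f)-1$ under the WLOG ordering, inserts the ideal joint hypothesis $h^*$ by the triangle inequality, and then the only quantities compared across domains are of the form $|\epsilon_{\mathcal{D}^a}(h_k,h^*)-\epsilon_{\mathcal{D}_k^a}(h_k,h^*)|$ --- differences of disagreements between two hypotheses, exactly what $\tfrac12 d_{\mathcal{H}\Delta\mathcal{H}}$ is built to bound; the $\lambda^a_k$ terms then fall out of triangle inequalities converting $h^*$-disagreements into $f$-disagreements, the local $\Delta\text{DP}_{\mathcal{D}_k}(h_k)$ is reassembled at the end, and Lemma \ref{lemma: ben} is applied last. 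Separately, note that the definition the paper actually displays before Lemma \ref{lemma: ben} is a supremum of $|\Pr_{\mathcal{D}}[I(h)]-\Pr_{\mathcal{D}'}[I(h)]|$ over $h\in\mathcal{H}$, i.e., an $\mathcal{H}$-divergence over the sets $I(h)$ themselves; under that reading, your step 1 plus the definition gives $|\gamma^a_{\mathcal{D}_g}(h_k)-\gamma^a_{\mathcal{D}_k}(h_k)|\le\tfrac12 d_{\mathcal{H}\Delta\mathcal{H}}(\mathcal{D}_g^a,\mathcal{D}_k^a)$ immediately, with no $h^*_a$ and no $\lambda^a_k$ at all (the $\lambda$'s are then just non-negative slack). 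In other words, the move you dismiss as ``careless'' --- surfacing only the one-sided $\mathcal{H}$-divergence --- is the one that is actually valid in this setting (or, alternatively, you must assume the constant hypotheses belong to $\mathcal{H}$); the ``delicate'' $h^*_a$ routing you propose in its place is the step that does not go through.
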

\begin{proof}
Our proof heavily uses the triangle inequality \cite{ben2006analysis},
\begin{equation}
    \epsilon(f_1,f_2) \le \epsilon(f_1,f_3) + \epsilon(f_2,f_3), 
\end{equation}
where $f_1,f_2,f_3$ are any labeling functions. Without loss of generality, we assume $\mathbb{E}_{x\sim \mathcal{D}^0}[h(x)] \ge \mathbb{E}_{x\sim \mathcal{D}^1}[h(x)]$. The discrimination level can be calculated as
\begin{align}
   \Delta \text{DP}_\mathcal{D}(h) &= \left|\mathbb{E}_{x\sim \mathcal{D}^0}[h(x)]-\mathbb{E}_{x\sim \mathcal{D}^1}[h(x)]\right| \\ \nonumber
   &= \mathbb{E}_{x\sim \mathcal{D}^0}[h(x)]+\mathbb{E}_{x\sim \mathcal{D}^1}[1-h(x)]-1 \\ \nonumber
   &= \epsilon_{\mathcal{D}^0}\left(h,f \right) + \epsilon _{\mathcal{D}^1}\left( {1 - h,f} \right) - 1
\end{align}
 The discrimination level of $h_k$ on $\mathcal{D}$ is
\begin{align*}
   &\Delta \text{DP}_\mathcal{D}\left(h_k\right) =  \epsilon_{\mathcal{D}^0}\left(h_k,f \right) + \epsilon _{\mathcal{D}^1}\left( {1 - {h_k},f} \right) - 1 \\
   &  \leq\epsilon _{{\mathcal{D}^0}}\left( h_k, h^* \right) + \epsilon _{{\mathcal{D}^0}}\left( f,h^* \right) + \epsilon_{\mathcal{D}^1}\left( {1-h_k,h^*} \right)   \\
   &\quad + \epsilon _{\mathcal{D}^1}\left(f,h^*\right) - 1   \\
   & = \epsilon _{\mathcal{D}^0}\left( h_k, h^* \right) - \epsilon _{\mathcal{D}_k^0}\left( h_k, h^* \right) + \epsilon _{\mathcal{D}_k^0}\left( h_k, h^* \right) + \epsilon _{\mathcal{D}^0}\left( f,h^* \right)   \\
   &\quad+ \epsilon _{\mathcal{D}^1}\left( {1 - {h_k},{h^*}} \right) - \epsilon _{\mathcal{D}_k^1}\left( {1 - {h_k},{h^*}} \right)   \\
   & \quad + \epsilon _{\mathcal{D}_k^1}\left( {1 - {h_k},{h^*}} \right) + \epsilon _{\mathcal{D}^1}\left( f,h^* \right) - 1   \\
   &  \leq\left| {\epsilon _{\mathcal{D}^0}\left( h_k, h^* \right) - \epsilon _{\mathcal{D}_k^0}}\left(h_k, h^* \right) \right|+ \epsilon _{\mathcal{D}_k^0}\left( h_k, h^* \right) + \epsilon _{\mathcal{D}^0}\left( f,h^* \right)  \\
   & \quad+ \left| {\epsilon _{\mathcal{D}^1}\left( {1 - {h_k},{h^*}} \right) - \epsilon _{\mathcal{D}_k^1}}\left( {1 - {h_k},{h^*}} \right) \right|   \\ 
   &\quad+ \epsilon _{\mathcal{D}_k^1}\left( {1 - {h_k},{h^*}} \right) + \epsilon _{\mathcal{D}^1}\left( f,h^* \right) - 1   \\
   &  \leq \frac{1}{2}{d_{\mathcal{H}\Delta \mathcal{H}}}\left( {{\mathcal{D}^0},\mathcal{D}_k^0} \right) + \frac{1}{2}{\mathcal{D}_{\mathcal{H}\Delta \mathcal{H}}}\left(\mathcal{D}^1,\mathcal{D}_k^1 \right) + \epsilon _{\mathcal{D}_k^0}\left( h_k, h^* \right)   \\
   &\quad+ \epsilon _{\mathcal{D}^0}\left( f,h^* \right) + \epsilon _{\mathcal{D}_k^1}\left( {1 - {h_k},{h^*}} \right) + \epsilon _{\mathcal{D}^1}\left( f,h^* \right) - 1   \\
   & \leq \frac{1}{2}{d_{\mathcal{H}\Delta \mathcal{H}}}\left( {{\mathcal{D}^0},D_k^0} \right) + \frac{1}{2}{d_{\mathcal{H}\Delta \mathcal{H}}}\left( {{\mathcal{D}^1},D_k^1} \right) + \epsilon _{D_k^0}\left( {{h_k},f} \right)   \\
   & \quad+ \epsilon _{D_k^0}\left( f,h^* \right) + \epsilon _{\mathcal{D}^0}\left( f,h^* \right) + \epsilon _{D_k^1}\left( {1 - {h_k},f} \right)   \\
   & \quad+ \epsilon _{D_k^1}\left( f,h^* \right) + \epsilon _{\mathcal{D}^1}\left( f,h^* \right) - 1   \\
   & = \frac{1}{2}{d_{\mathcal{H}\Delta \mathcal{H}}}\left( {{\mathcal{D}^0},D_k^0} \right) + \frac{1}{2}{d_{\mathcal{H}\Delta \mathcal{H}}}\left( {{\mathcal{D}^1},D_k^1} \right)   \\
   & \quad+ \epsilon _{D_k^0}\left( {{h_k},f} \right) + {\lambda _0} + \epsilon _{D_k^1}\left( {1 - {h_k},f} \right) + {\lambda _1} - 1   \\
   & = \frac{1}{2}{d_{\mathcal{H}\Delta \mathcal{H}}}\left( {{\mathcal{D}^0},D_k^0} \right) + \frac{1}{2}{d_{\mathcal{H}\Delta \mathcal{H}}}\left(\mathcal{D}^1,\mathcal{D}_k^1 \right) \\
   & \quad+ {\lambda _0} + {\lambda _1} + \Delta \textrm{DP}_{\mathcal{D}_k}\left(h_k\right) \\
   & \leq \frac{1}{2}d_{\mathcal{H}\Delta \mathcal{H}}\left(\mathcal{U}_g^0,\mathcal{U}_k^0\right) + \frac{1}{2}{d_{\mathcal{H}\Delta \mathcal{H}}}\left(\mathcal{U}_g^1,\mathcal{U}_k^1\right) + {\lambda^0_k} + {\lambda^1_k} + \\ & \quad \Delta\text{DP}_{\mathcal{D}_k}\left(h_k\right) + 8\sqrt{\frac{2d\log (2m) + \log(2/d)}{m}}.
\end{align*}
The last inequality is due to Lemma \ref{lemma: ben}.
\end{proof}
Theorem \ref{theorem: fairness upper bound} provides some insights on why local debiasing is ineffective. Consider a client $k$, who can implement some powerful local debiasing method that produces a perfectly fair hypothesis $h$ on the local domain $\mathcal{D}_k$. However, this fairness guarantee could fail to generalize to the global domain $\mathcal{D}_g$. Because the $\mathcal{H}$-divergence between $\mathcal{U}^a$ and $\mathcal{U}_k^a$ (a={0,1}) can be huge due to the non-iid fact in FL.
\subsubsection{Generated samples improve fairness on global distribution} The client can train a model with better fairness generalization performance on the global distribution with the generated samples.
\begin{remark} \label{lemma: G reduce d}
    Denote $\mathcal{D}'_k = (1-\alpha)\mathcal{D}_k + \alpha\mathcal{D}_g$ the client $k$'s data, which is the augmented by the generated samples drawn from $\mathcal{D}_g$. $\alpha$ is the ratio of generated samples over the total samples. The $\mathcal{H}$-divergence between local and global data. we have
    \begin{equation}
        d_{\mathcal{H}\Delta \mathcal{H}}\left(\mathcal{D},\mathcal{D}'_k\right) \le d_{\mathcal{H}\Delta \mathcal{H}}\left(\mathcal{D},\mathcal{D}_k\right).
    \end{equation}
\end{remark}
\begin{proof}
    According to the definition, we have
    \begin{align}
        & d_{\mathcal{H}\Delta \mathcal{H}}\left(\mathcal{D},\mathcal{D}'_k\right) \nonumber
        = 2\sup_{h\in \mathcal{H}}\left|\textrm{Pr}_\mathcal{D}[I(h)]-\textrm{Pr}_{\mathcal{D}_k'}[I(h)]\right| \\ \nonumber
        = & 2\sup_{h\in \mathcal{H}}\left|\textrm{Pr}_\mathcal{D}[I(h)]-(1-\alpha)\textrm{Pr}_{\mathcal{D}_k}[I(h)]-\alpha \textrm{Pr}_{\mathcal{D}_g}[I(h)]\right| \\\nonumber
        \leq & 2(1-\alpha)\sup_{h\in \mathcal{H}}\left|\textrm{Pr}_\mathcal{D}[I(h)]-\textrm{Pr}_{\mathcal{D}_k}[I(h)]\right| \\\nonumber
        & + 2\alpha \sup_{h\in \mathcal{H}}\left| \textrm{Pr}_\mathcal{D}[I(h)]- \textrm{Pr}_{\mathcal{D}_k}[I(h)]\right|\\ 
        = & d_{\mathcal{H}\Delta \mathcal{H}}\left(\mathcal{D},\mathcal{D}_k\right). 
    \end{align}
    The inequality is due to $d_{\mathcal{H}\Delta \mathcal{H}}\left(\mathcal{D},\mathcal{D}_g\right) \leq d_{\mathcal{H}\Delta\mathcal{H}}\left(\mathcal{D},\mathcal{D}_k\right)$. Because $G$ is trained to generate samples similar to the real samples drawn from the $\mathcal{D}$. 
\end{proof}
Remark \ref{lemma: G reduce d} shows augmenting the client's data with generated samples reduces the $\mathcal{H}$-divergence. The conclusion of Remark \ref{lemma: G reduce d} can be generalized to subdomains $\mathcal{D}^a$ ($a\in {0,1}$). In AFed, a conditional generator is designed to generate samples similar to those drawn from $\mathcal{D}^0$ and $\mathcal{D}^1$. Therefore, AFed is able to decrease $d_{\mathcal{H}\Delta \mathcal{H}}\left(\mathcal{D}^0,\mathcal{D}^0_k\right)$ and $d_{\mathcal{H}\Delta \mathcal{H}}\left(\mathcal{D}^1,\mathcal{D}^1_k\right)$ in Eq. \ref{eq: fairness generalization} and consequently induce a smaller discrimination level $\Delta \text{DP}_{\mathcal{D}_g}\left(h_k\right)$. In addition, a larger dataset size $m$ help close the gap between the empirical estimation and real $\mathcal{H}$-divergence $d_{\mathcal{H}\Delta \mathcal{H}}\left(\mathcal{D},\mathcal{D}_g\right)$ as shown in Eq. \ref{eq: h divergence gap}, which in turn helps improve the empirical performance.

\subsubsection{Mix-up samples improve fairness}
One reason for unfairness is the bias that exists in the unbalanced dataset. One way to alleviate this is to balance the dataset through data augmentation. As stated above, a conditional generator is collectively trained by all clients, which can generate fake representations conditional on sensitive value. We further proposed mixing the synthetic data with the real data to improve the quality of synthetic data. The sensitive attribute of the mixed data can be represented as $z=tz_0+(1-t)z_1$ \cite{zhang2018mixup}. To ensure the model makes consistent predictions over different demographic groups, we require the model's prediction on the mixed data to be invariant w.r.t $t$.

\begin{theorem} \cite{chuang2020fair} \label{fairmix}
Let $\bar {z} = t z_0 + (1-t)z_1$ be the mixed sample, $z_0$ and $z_1$ are samples with different sensitive attributes. For any differentiable function $f$, the discrimination level of $f$ on two groups $P_0$ and $P_1$ is
\begin{equation}
    \Delta \textrm{DP}(f) = \left|\int_0^1 \frac{d}{dt} \mathbb{E}_{\boldsymbol{z_0}\sim P_0,\boldsymbol{z_1}\sim P_1}\left[ f\left(\bar {z}\right)\right] dt\right|,
\end{equation}
\end{theorem}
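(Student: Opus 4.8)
The plan is to recognize that the claimed identity is nothing more than the fundamental theorem of calculus applied to the scalar function $t \mapsto \mathbb{E}_{z_0 \sim P_0,\, z_1 \sim P_1}[f(\bar z)]$ on the interval $[0,1]$, combined with the definition of demographic parity already recorded in Eq.~\ref{eq: dpreg}. First I would apply the fundamental theorem of calculus to collapse the integral on the right-hand side into a difference of boundary values:
\[
\int_0^1 \frac{d}{dt}\,\mathbb{E}_{z_0,z_1}\!\left[f(\bar z)\right] dt
= \mathbb{E}_{z_0,z_1}\!\left[f(\bar z)\right]\Big|_{t=1}
- \mathbb{E}_{z_0,z_1}\!\left[f(\bar z)\right]\Big|_{t=0}.
\]

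The second step is to evaluate the two boundary terms. Since $\bar z = t z_0 + (1-t) z_1$, substituting $t=1$ gives $\bar z = z_0$ and substituting $t=0$ gives $\bar z = z_1$. Because $f(z_0)$ no longer depends on $z_1$ (and symmetrically at the other endpoint), the joint expectation degenerates into the respective marginal expectation, yielding $\mathbb{E}_{z_0 \sim P_0}[f(z_0)] = \gamma_0(\hat Y)$ and $\mathbb{E}_{z_1 \sim P_1}[f(z_1)] = \gamma_1(\hat Y)$. Taking the absolute value of the resulting difference then reproduces $\Delta\textrm{DP}(f) = |\gamma_0(\hat Y) - \gamma_1(\hat Y)|$ exactly as in Eq.~\ref{eq: dpreg}, closing the argument.

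The main obstacle, and really the only nontrivial point, is justifying the interchange of differentiation and expectation that makes $\frac{d}{dt}\mathbb{E}[\,\cdot\,]$ a well-defined integrand in the first place. I would invoke the differentiability of $f$ together with a dominated-convergence (Leibniz rule) argument: the map $t \mapsto f(\bar z)$ is differentiable with derivative $\langle \nabla f(\bar z),\, z_0 - z_1 \rangle$, and provided this is dominated by an integrable envelope — for instance, $f$ having bounded gradient on the compact convex hull of the supports of $P_0$ and $P_1$ — the derivative may be moved inside the expectation. Once this regularity is established, the remaining computation is immediate. I would also emphasize that no assumption on the distribution of $t$ is required: the identity is deterministic in $t$ and integrates over the entire segment $[0,1]$, which is precisely what links the pointwise invariance constraint of Eq.~\ref{eq: fair constraint} to the global quantity $\Delta\textrm{DP}(f)$.
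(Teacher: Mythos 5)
Your proof is correct and is essentially the paper's own argument run in reverse: the paper starts from the definition of $\Delta\textrm{DP}$, rewrites the two group expectations as a joint expectation at the endpoints $t=1$ and $t=0$, applies the fundamental theorem of calculus pointwise to $t \mapsto f(T(z_0,z_1,t))$ inside the expectation, and then swaps the expectation with the integral and derivative, whereas you apply the fundamental theorem of calculus directly to $t \mapsto \mathbb{E}_{z_0,z_1}[f(\bar z)]$ and evaluate the boundary terms. The one substantive addition on your side is the explicit dominated-convergence justification for interchanging $\frac{d}{dt}$ and $\mathbb{E}$, which the paper performs silently; otherwise the content is identical.
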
 

\begin{proof}
Let $T(z_0,z_1,t)=t z_0 + (1-t)z_1$, $t=0$ gives samples from the group $P_1$ and $t=1$ for group $P_0$ 
\begin{equation}
\begin{split}
    & \Delta \textrm{DP}(f) = |\mathbb{E}_{z_0\sim P_0}\left[f(x)\right] - \mathbb{E}_{z_1\sim P_1}\left[f(x)\right]| \\
    = & \left|\mathbb{E}_{z_0\sim P_0, z_1\sim P_1}\left[f(T(z_0,z_1,0))\right] \right.\\
    & - \left. \mathbb{E}_{z_0\sim P_0, z_1\sim P_1}\left[f(T(z_0,z_1,1))\right]\right| \\
    = &  \left|\mathbb{E}_{z_0\sim P_0, z_1\sim P_1}\left[f(T(z_0,z_1,0))-f(T(z_0,z_1,1))\right]\right| \\
    = & \left|\mathbb{E}_{z_0\sim P_0, z_1\sim P_1}\left[\int_0^1 \frac{d}{dt} f(T(z_0,z_1,t)) dt\right]\right| \\
    = & \left|\int_0^1 \frac{d}{dt} \mathbb{E}_{z_0\sim P_0, z_1\sim P_1}\left[ f(T(z_0,z_1,t)) \right] dt \right|.
\end{split}
\end{equation}
\end{proof}
Theorem \ref{fairmix} shows that the discrimination level $\Delta \textrm{DP}(f)$ is related to the derivative of the model $f$'s outcomes $f(\bar Z)$ w.r.t $t$. Specifically, we can reduce the resulting discrimination level by restricting the derivative. That is, if we have Eq. \ref{eq: fair constraint}, then we must have $\Delta \textrm{DP}(f) = 0$.

\section{Experiments}
In this section, we empirically verify the effectiveness of AFed. We compare the performance of our method with several baseline methods on four real-world datasets. After that, we demonstrate the performance gain of auxiliary classification head $h^a$ and the robustness of our method w.r.t different participant ratios.

\subsection{Experimental Setup}
We conducted experiments on four datasets, including three image datasets (UTKFace \cite{zhang2017age}, FairFace \cite{karkkainen2021fairface}, and CelebA \cite{liu2015faceattributes}) and one tabular dataset (Adult\footnote{http://archive.ics.uci.edu/ml/datasets/Adult}). In UTKFace and FairFace datasets, we set race as the sensitive attribute when training the model to predict the gender of a given image. For the CelebA dataset, the objective is to determine whether an image contains an object with wavy hair, and the data is split into male and female groups. For the Adult dataset, we aim to determine if a person's income exceeds 50K with gender as the sensitive attribute.


In our experiments, we preprocessed the dataset to increase the heterogeneity by 1) using the Dirichlet distribution to determine the quantity of data clients hold for the UTKface, Fairface, and Adult datasets; 2) creating multiple sets each containing images of twenty different celebrities from the CelebA dataset, each set is assigned to a different client. In addition, we also exaggerated the imbalance in the data.

\subsubsection{Configuration} We implemented the feature extractor $E$ using a five-layer convolutional neural network with a kernel size of 3, a stride of 2, and padding of 1. Each layer is followed by a batch normalization layer and a Rectified Linear Unit (ReLU) activation function. The two classification heads are realized by two fully connected layers with a hidden dimension of size 32. The generator and discriminator are implemented using Multi-Layer Perceptrons (MLPs) as \cite{zhu2021data}. The generator $G$ takes a noise vector $\epsilon$ of 32 dimensions and a one-hot sensitive attribute vector $a$ of 2 dimensions as input. The output dimension is the same as that of the feature extractor, which is 64 in our experiments. We use the Adam optimizer to update the feature extractor and two classification heads, as well as the RMSprop optimizer for the generator and discriminator. The learning rate is set to $0.005$ for the classification models and $0.0001$ for $G$, and $0.0003$ for $D$. The batch size equals the size of the local training dataset.

We simulate five clients, all clients are selected during every global communication round. The training epochs are set to 150, and the local training epochs are set to $T = 5$. 

\subsubsection{Baseline} To illustrate the effectiveness and superiority of AFed over debiasing methods that are solely based on local data, we select three baselines:
\begin{itemize}
    \item FedAvg: The canonical training method in FL without any considerations about fairness.
    \item FedReg: A naive way to learn fair model in FL. We add the fairness regularization term to the loss function to train a fair model on the client side and aggregate them on the server side.
    \item FairFed \cite{ezzeldin2021fairfed}: A reweighting-based mechanism, it amplifies the local debiasing via dynamically changing the aggregating weights of clients in each round. To make a fair comparison, we use Eq. \ref{eq: fair constraint} as a regularizer to local debiasing. 
    \item FairFB \cite{zeng2021improving}: Assigns different weights for each group, and the weight is optimized on the client side and aggregated by the server.

    \item FedGFT \cite{wang2023mitigating}: adapts the local objective function to the fairness regularization and uses strictly local summary statistics to minimize the fairness-penalized empirical loss of the global model.
    \item CML: We assemble all clients' data to form a centralized setting. We implement fair mix-up \cite{chuang2020fair} to train a fair model.
\end{itemize}

\subsection{Effectiveness of Proposed Method}
\begin{figure}[tp]
    \centering
        \begin{subfigure}[b]{0.24\textwidth}
        \includegraphics[width=\textwidth,height=\textwidth,keepaspectratio]{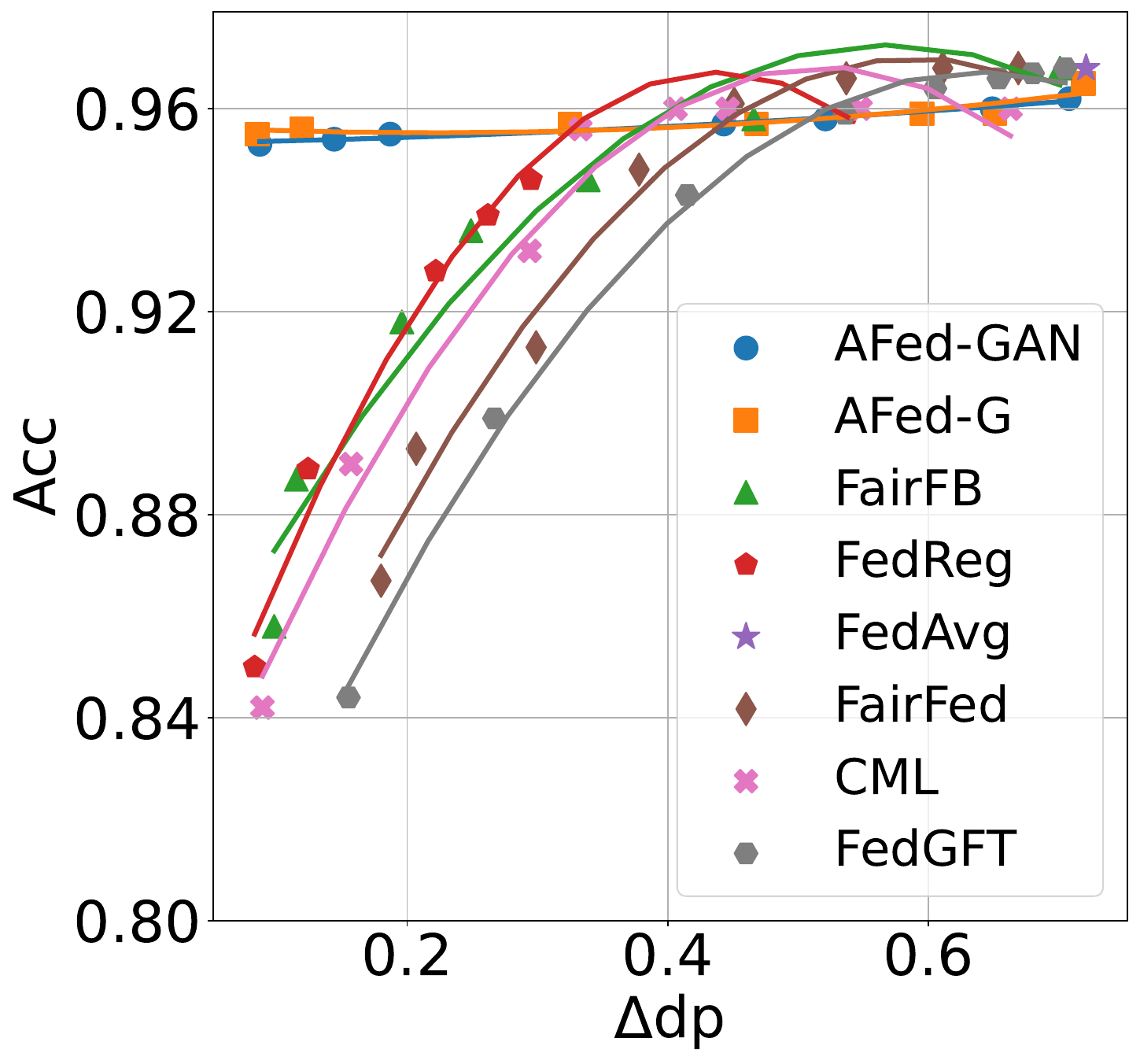}
        \caption{Adult}
    \end{subfigure}
        \hfill
    \begin{subfigure}[b]{0.24\textwidth}
    \includegraphics[width=\textwidth,height=\textwidth,keepaspectratio]{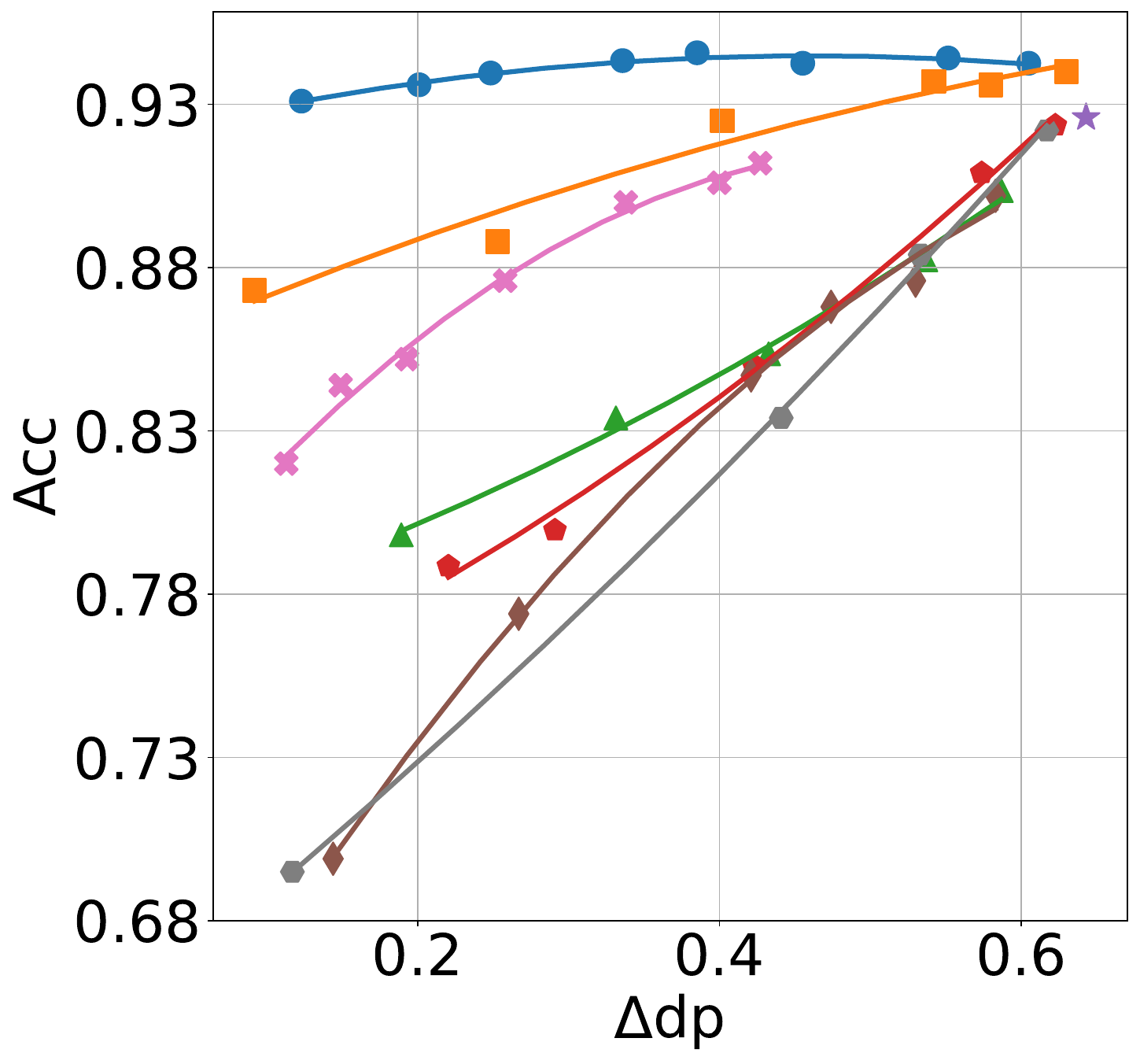}
        \caption{UTKface}
        \label{fig: utkface_acc_fair_res}
    \end{subfigure}
    \\
    \begin{subfigure}[b]{0.24\textwidth}
        \includegraphics[width=\textwidth,height=\textwidth,keepaspectratio]{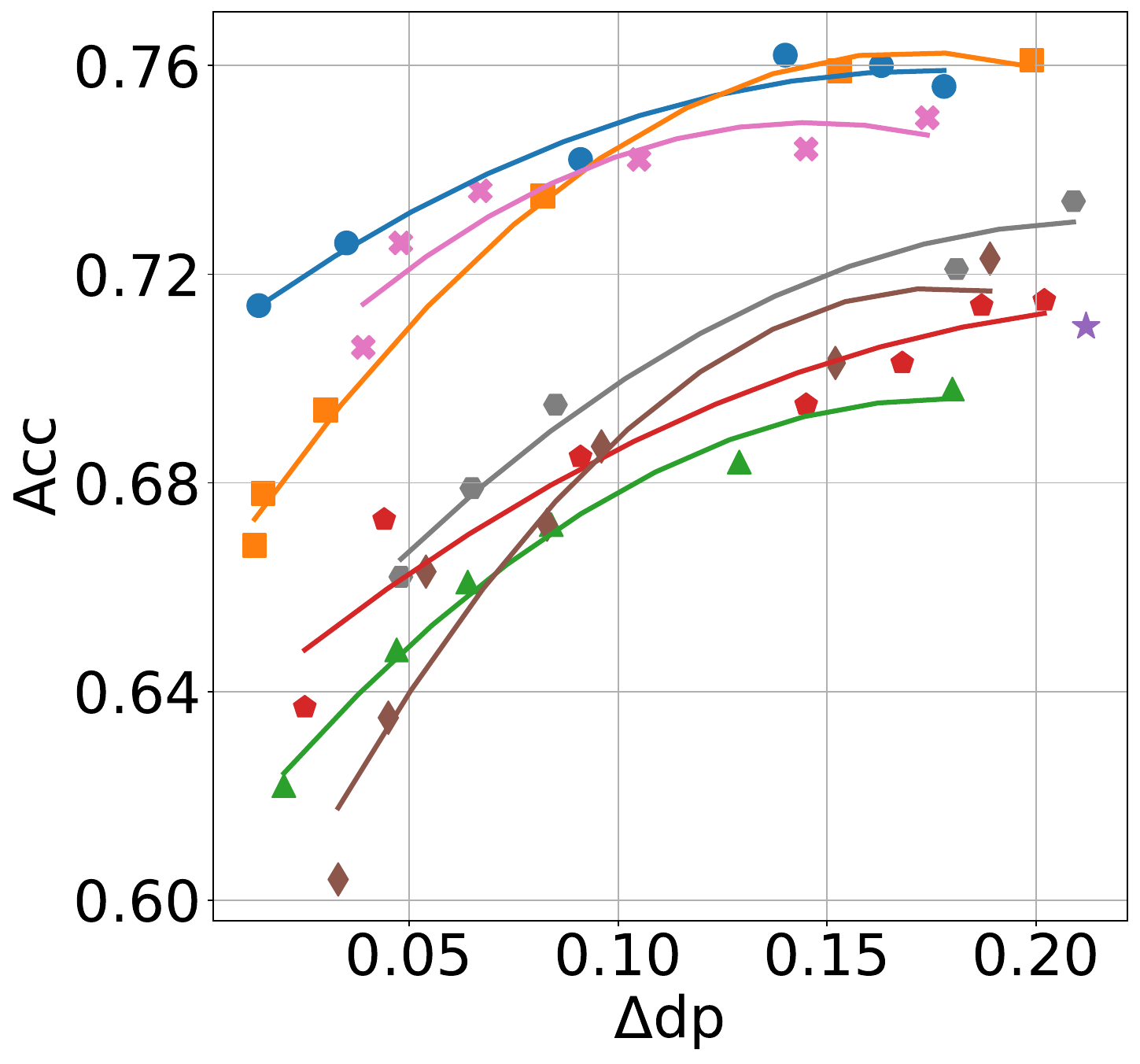}
        \caption{Fairface}
    \end{subfigure}    
    \begin{subfigure}[b]{0.24\textwidth}
        \includegraphics[width=\textwidth,height=\textwidth,keepaspectratio]{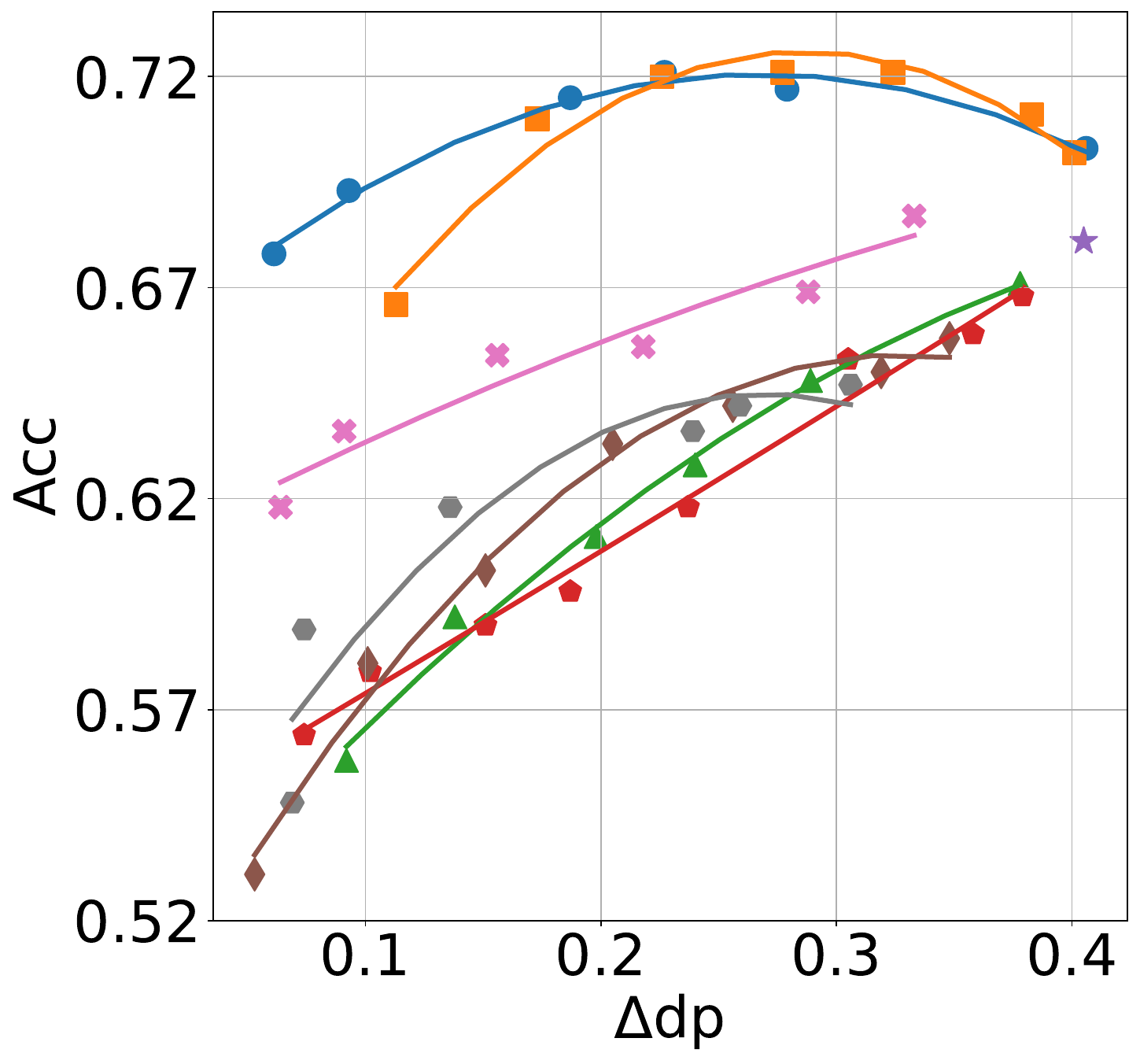}
        \caption{CelebA}
    \end{subfigure}
     \hfill
    \caption{The accuracy-fairness trade-off results on four real-world datasets. We alter the value of $\lambda$ in Eq. \ref{fig:Acc fair trade-off} to get the different accuracy and fairness metrics values. Results located in the upper left corner are preferable as they perform better in accuracy and fairness.}
    \label{fig:Acc fair trade-off}
\end{figure}

Fig. \ref{fig:Acc fair trade-off} shows the trade-offs between accuracy and $\Delta \textrm{dp}$ on three different datasets. All curves are obtained by altering the hyperparameter $\lambda$ in Eq. \ref{eq: base opt problem}, which controls the trade-off between accuracy and fairness. A smaller $\Delta \textrm{DP}$ means a fairer model. Only controlling $\Delta \textrm{DP}$ is trivial. However, our goal is to train a model that is both fair and accurate, which means we must control unfairness while still achieving acceptable accuracy performance. Therefore, we prioritize curves in the upper left corner over those in the lower right corner.

We present the averaged results across five runs in all experiments to mitigate the impact of randomness. Overall, several key observations can be drawn. 1) Both FairFed-Mix and CML significantly reduce discrimination but suffer from substantial accuracy degradation. As shown in Figure \ref{fig: utkface_acc_fair_res}, these methods reduce $\Delta DP$ by two-thirds, but at the cost of a more than 10\% accuracy drop for CML and over 20\% for FairFed-Mix. 2) AFed-GAN demonstrates superior performance compared to AFed-G and FairFed-Mix, particularly in the UTKface, Fairface, and CelebA datasets, where it outperforms FairFed-Mix and CML by a large margin. 3) On the Adult dataset, AFed-G achieves comparable results to AFed-GAN, while on more complex image-based tasks where the latent distribution is more diverse, AFed-GAN surpasses AFed-G. 4) Both AFed-G and AFed-GAN achieve higher accuracy than FedAvg, likely due to the generator's ability to extract and provide clients with valuable information about the global data distribution.


\begin{figure*}[htp]
    \centering
    \begin{subfigure}[b]{0.24\textwidth}
        \includegraphics[width=\textwidth]{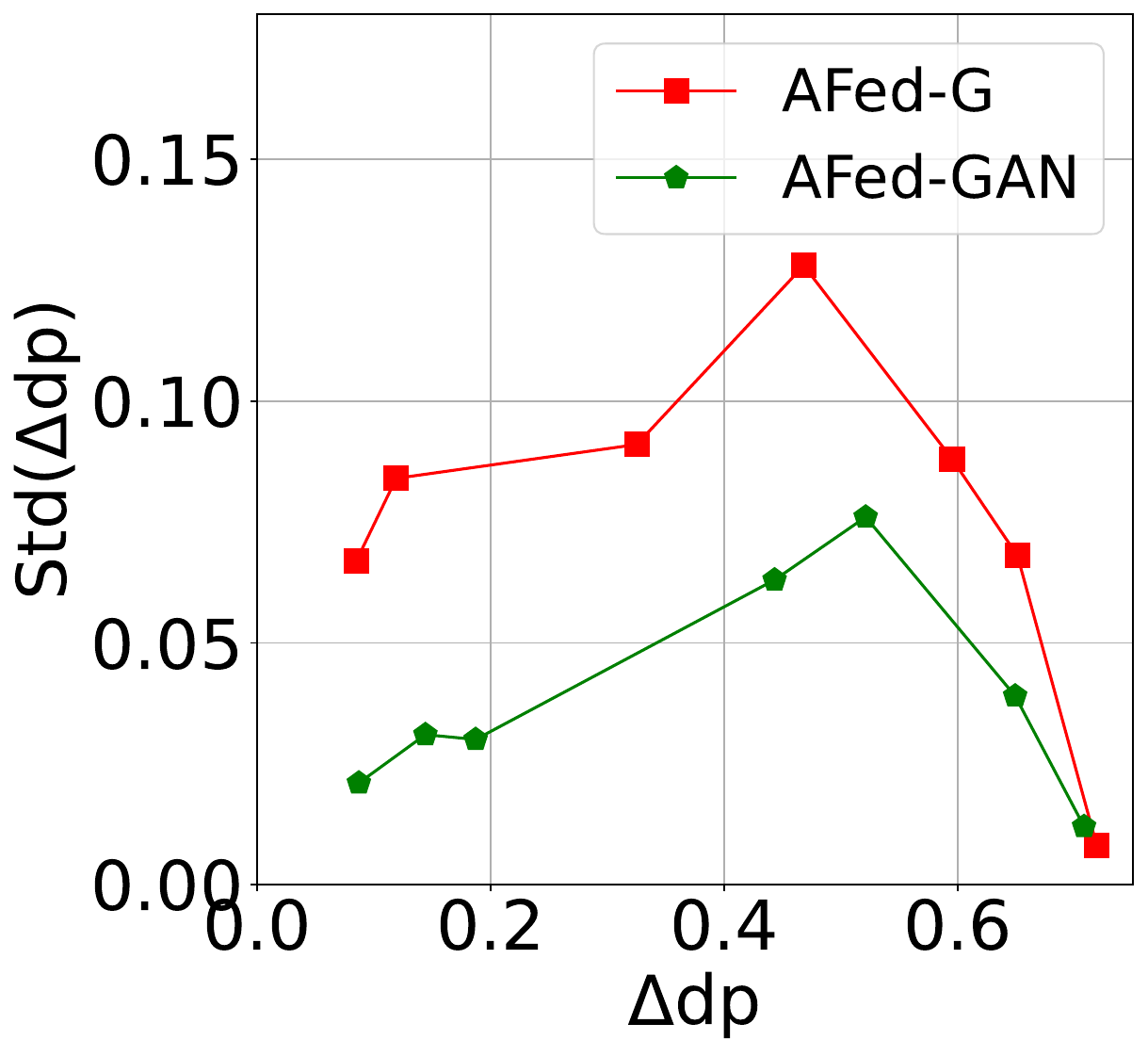}
        \caption{Adult} 
    \end{subfigure}
    \hfill
    \begin{subfigure}[b]{0.24\textwidth}
        \includegraphics[width=\textwidth]{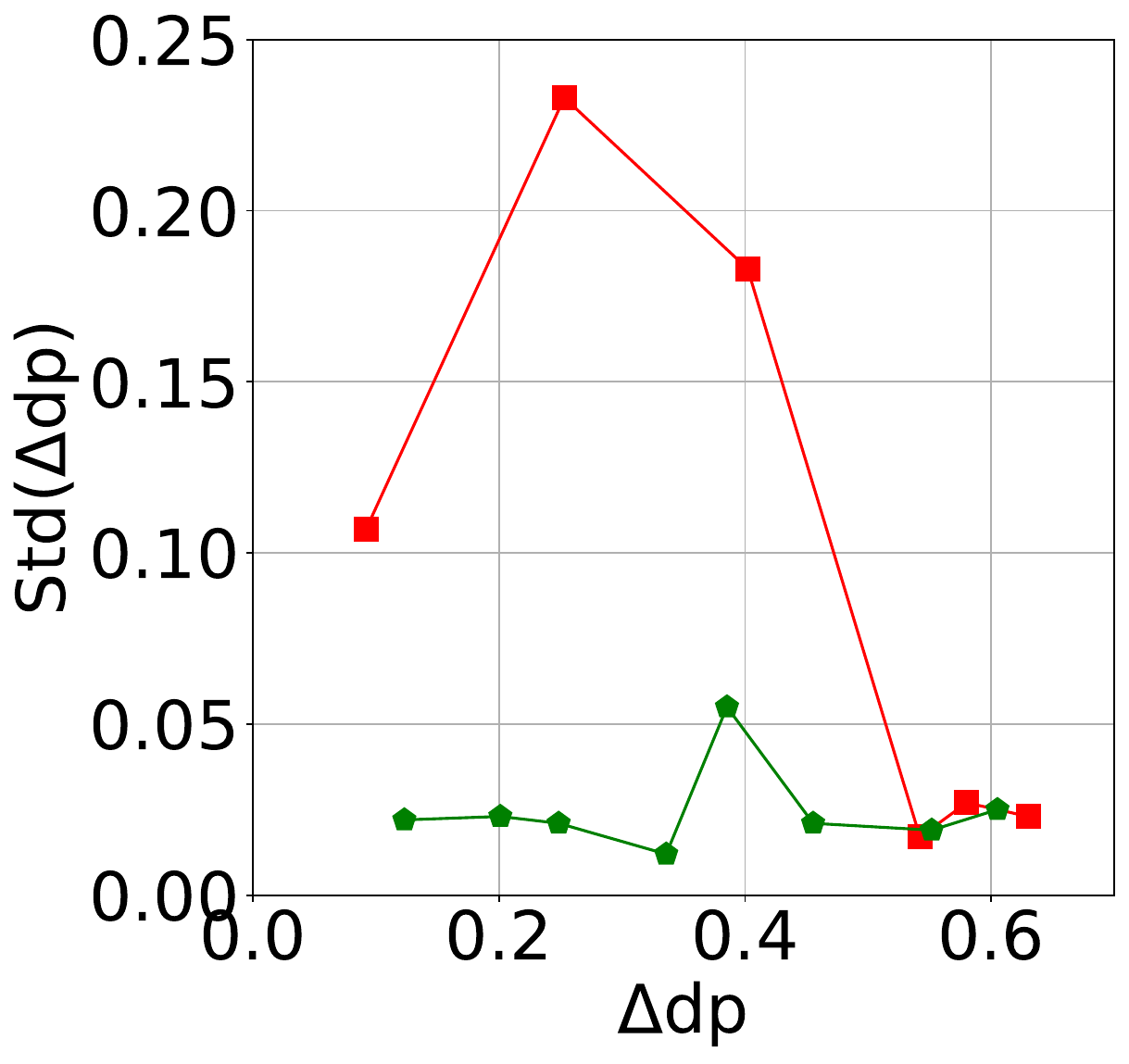}
        \caption{UTKface} 
    \end{subfigure}
    \hfill
    \begin{subfigure}[b]{0.24\textwidth}
        \includegraphics[width=\textwidth]{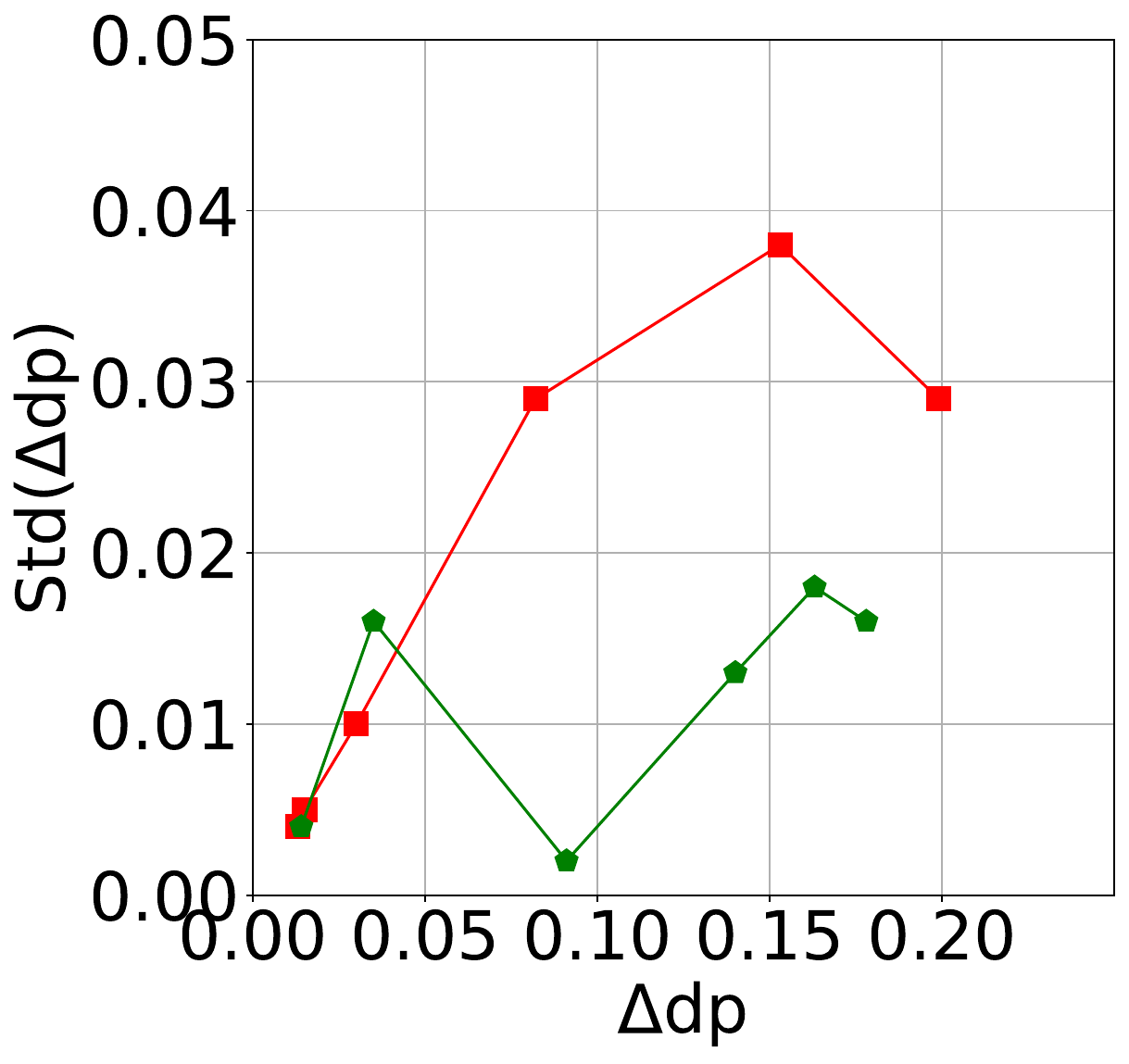}
        \caption{Fairface} 
    \end{subfigure} 
     \hfill
    \begin{subfigure}[b]{0.24\textwidth}
        \includegraphics[width=\textwidth]{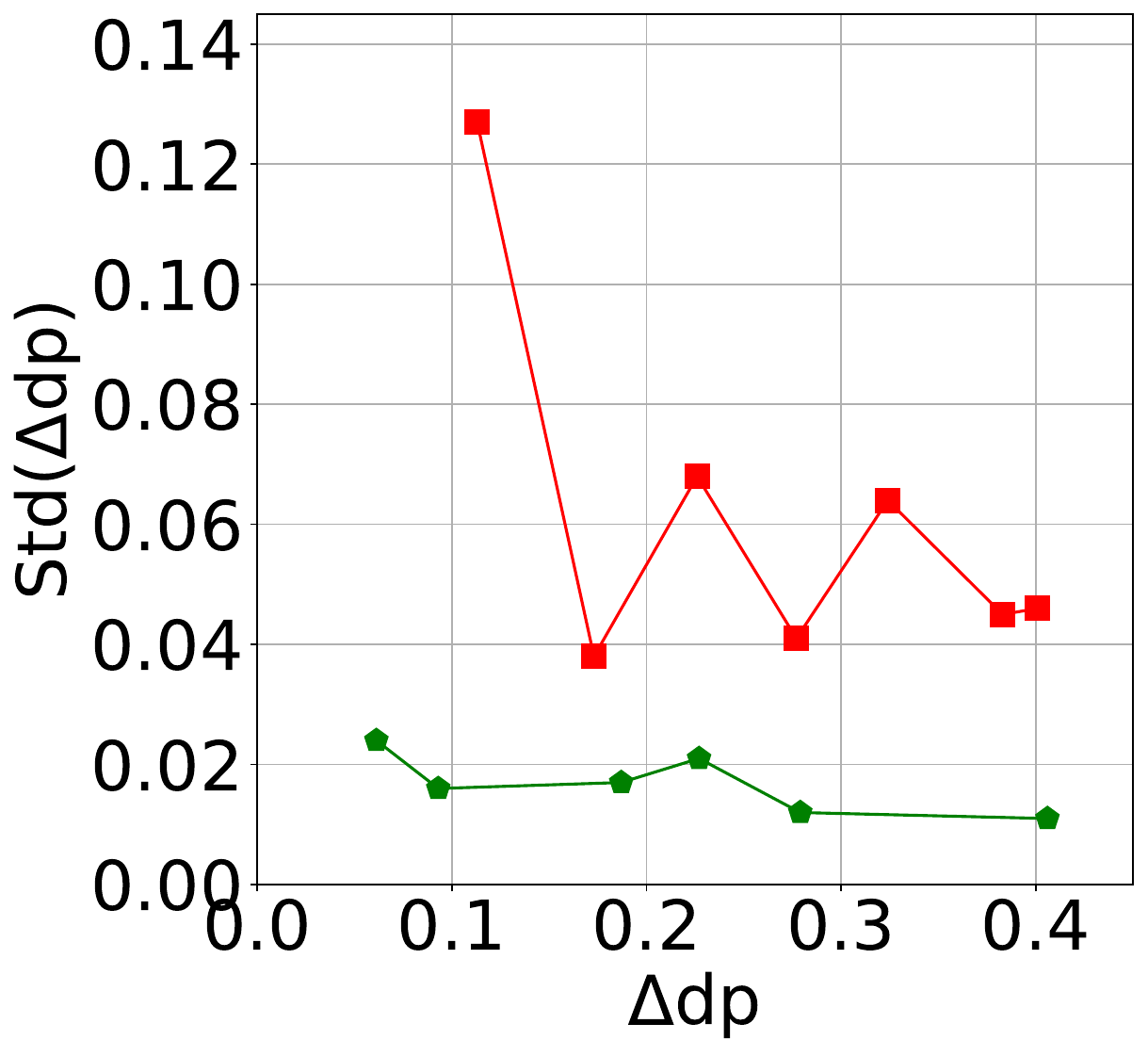}
        \caption{CelebA} 
    \end{subfigure}
    \caption{The change of standard variance of $\Delta \textrm{DP}$ w.r.t $\Delta \textrm{DP}$. Larger $\textrm{Std}(\Delta \textrm{DP})$ indicates higher fluctuation in debiasing result.}
    \label{fig:fair_std}
\end{figure*}

\begin{figure*}[htp]
    \centering
    \begin{subfigure}[b]{0.24\textwidth}
        \includegraphics[width=\textwidth]{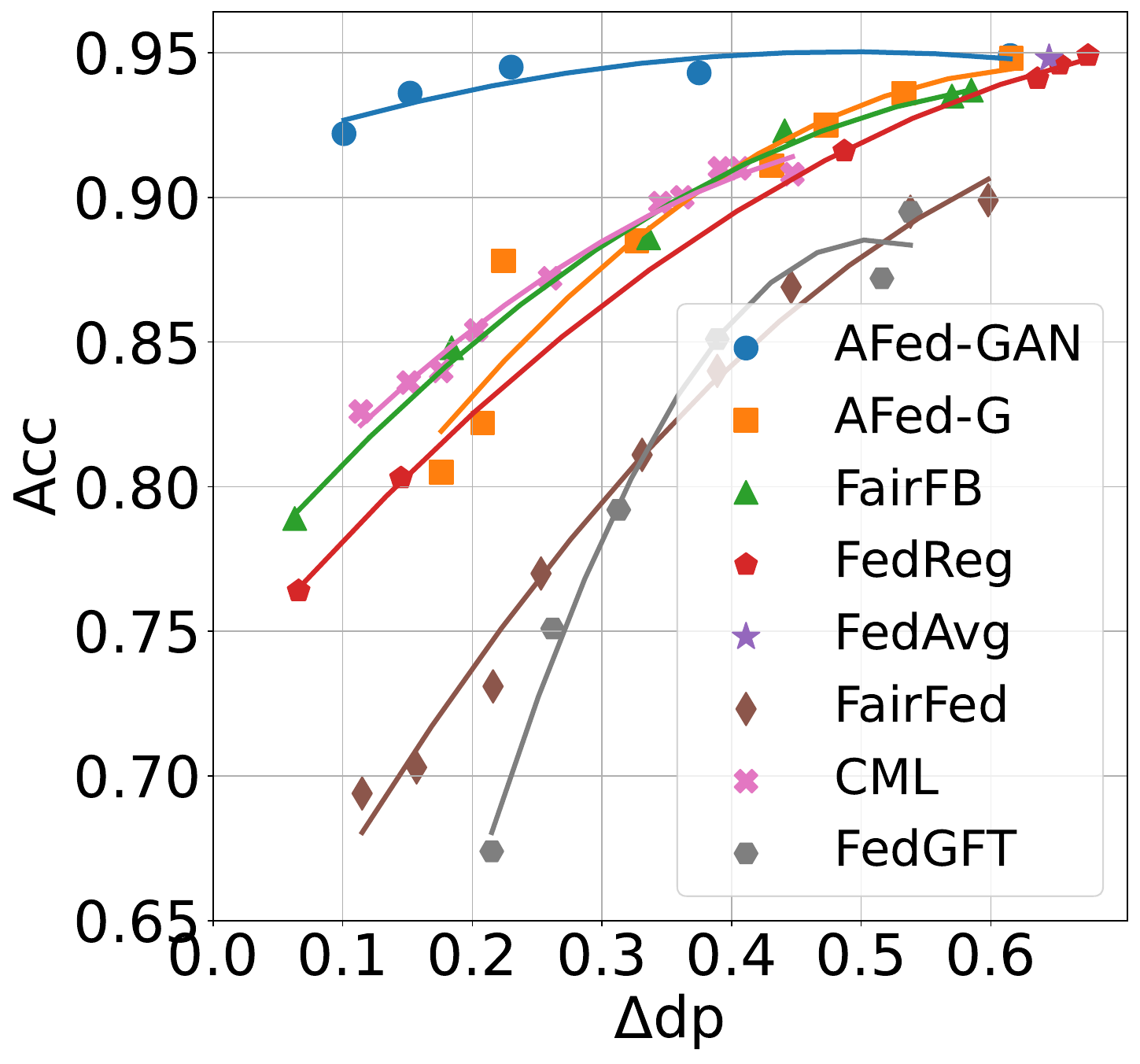}
        \caption{$r$=0.2} 
    \end{subfigure}
    \hfill
    \begin{subfigure}[b]{0.24\textwidth}
        \includegraphics[width=\textwidth]{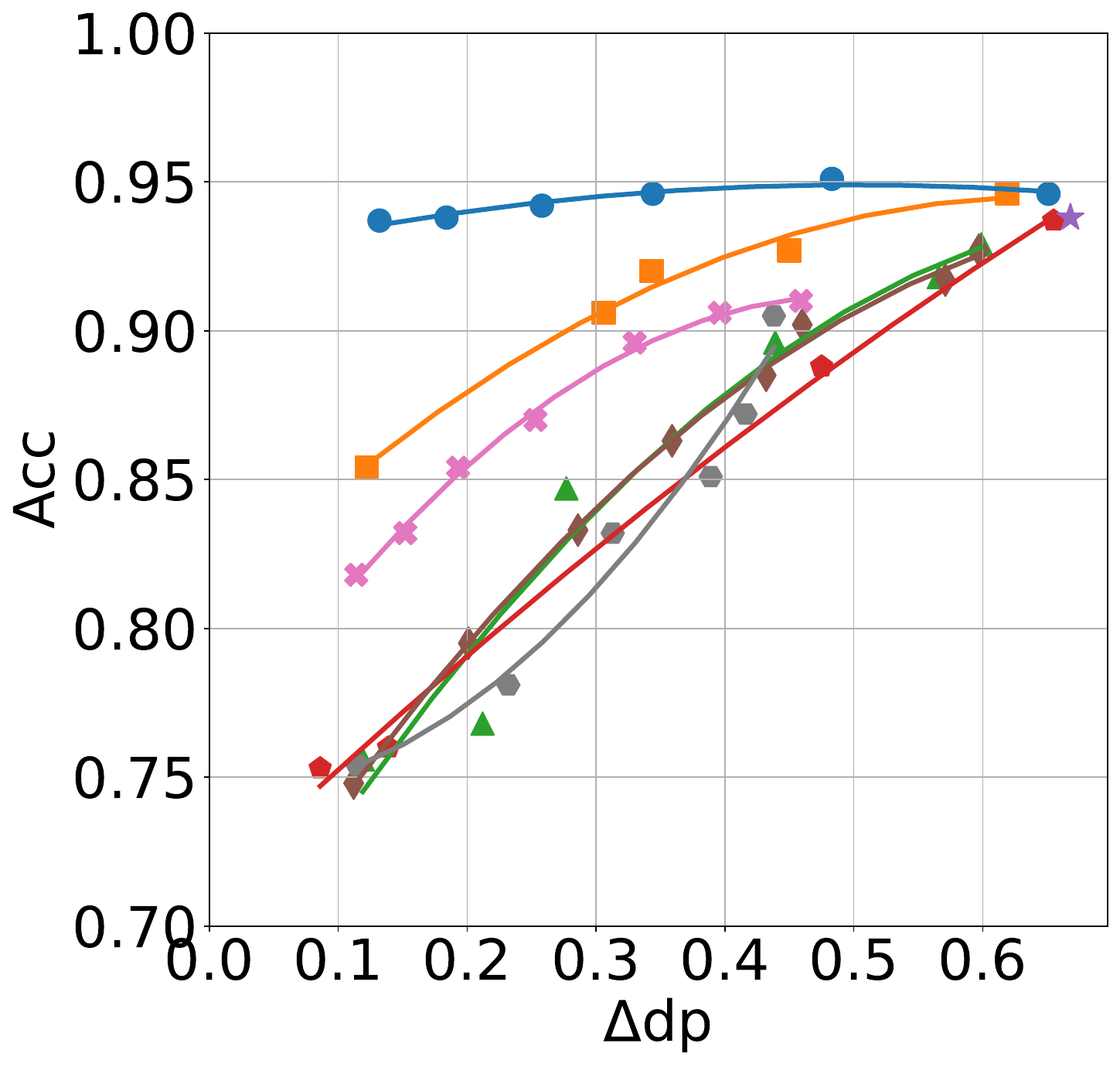}
        \caption{$r$=0.4} 
    \end{subfigure}    
    \begin{subfigure}[b]{0.24\textwidth}
        \includegraphics[width=\textwidth]{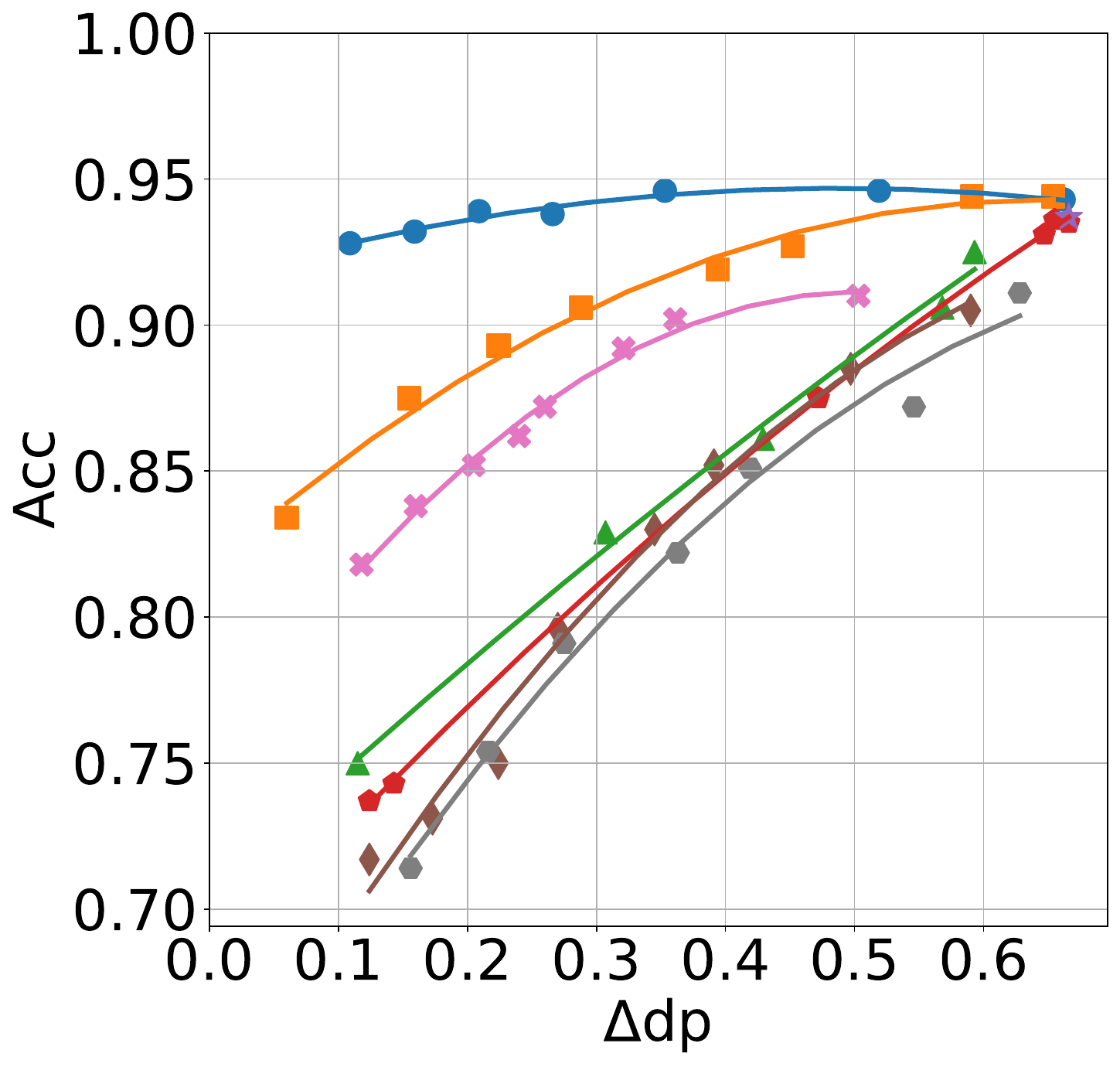}
        \caption{$r$=0.6} 
    \end{subfigure}
     \hfill
    \begin{subfigure}[b]{0.24\textwidth}
        \includegraphics[width=\textwidth]{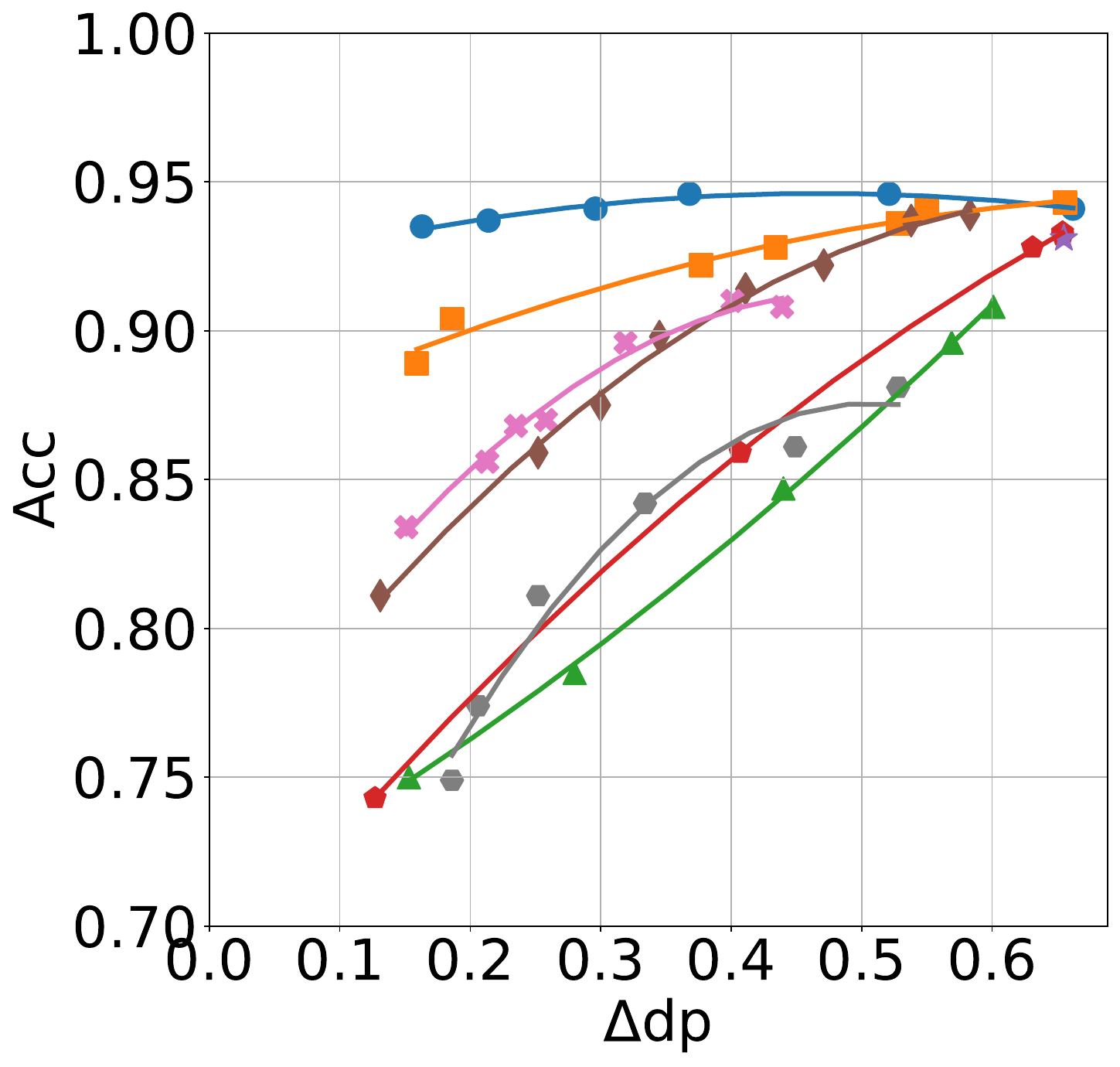}
        \caption{$r$=0.8} 
    \end{subfigure}
    \caption{Results on UTKface dataset. Four ratios $r$ are selected to show the impact of participant ratio on the proposed methods. Our methods consistently outperform baseline methods.}
    \label{fig:different r}
\end{figure*}
\subsection{Comparison between Generator and GAN Approach}
Our analysis in Section V indicates the conditional generator in AFed-G surfers lack of diversity problem, which would result in inferior performance to the GAN approach. Our experiment results confirm this finding. As we can see from Fig. \ref{fig:Acc fair trade-off}, for the simple task, i.e., task on the Adult dataset, AFed-GAN and AFed-G demonstrate similar performance. While on complicated tasks, i.e., tasks on UTKface, Fairface and CelebA, AFed-GAN outperforms AFed-G. As $\Delta \textrm{DP}$ decrease, the accuracy gap between AFed-GAN and AFed-G increase. 

We contribute this to the sample diversity. One drawback of AFed-G is optimizing Eq. \ref{eq: AFed-G-opt} will decrease the diversity of the generated samples. Debiasing with those samples will work at the earlier stage, and as the $\Delta \textrm{DP}$ decreases, the model can make fair predictions on most samples except for some \quotes{hard} samples. And, due to the lack of diversity, the AFed-G algorithm fails to generate samples similar to those \quotes{hard} samples that contribute to unfairness. As a comparison, AFed-GAN can generate diverse samples, thus, better debiasing performance. 

Our claim is supported by Fig.\ref{fig:fair_std}. It shows the change of $\textrm{std}(\Delta\textrm{DP})$ w.r.t different $\Delta\textrm{DP}$. For all four tasks, AFed-G shows an overall higher standard variation than AFed-GAN. This confirms our argument that due to the lack of effective feedback, data generated by AFed-G lacks diversity. Hence, leading to a higher fluctuation in the debiasing result.

\subsection{Effects of Auxiliary Classification Head}
We remove the auxiliary classification head $h^a$ and rerun the experiment on the UTKface dataset. The results are shown in Fig. \ref{fig: w/o ha}. The performance of both AFed-GAN and AFed-G drops after removing $h^a$. This confirms our previous analysis that $h^a$ helps the feature extractor extract more informative representations and keep information about label $y$ and attribute $a$. Which consequently benefits the generator training. Nevertheless, our methods outperform baseline methods without an auxiliary classification head.
\begin{figure}[htp]
    \centering
    \includegraphics[width=0.24\textwidth]{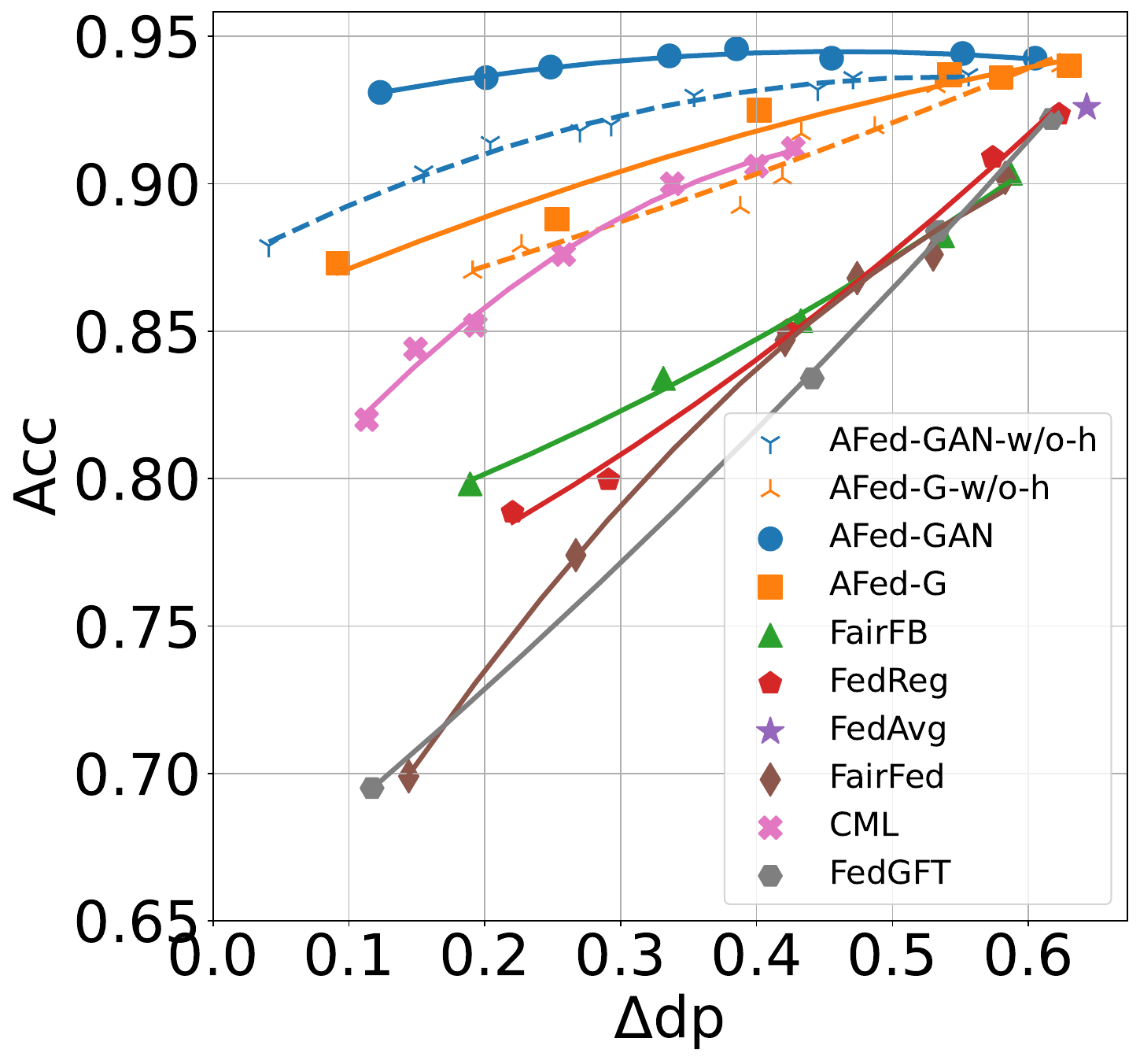}
    \caption{AFed-G and AFed-GAN with and w/o $h^a$. Results are obtained on the UTKface dataset.}
    \label{fig: w/o ha}
\end{figure}

\subsection{Impacts of Participant ratio}
In each round of FL, only part of the clients is selected to participate in the current round. To investigate the impact of participant ratio $r$ on proposed methods, we performed an additional experiment on the UTKface dataset. We vary participant ratio $r$ from 0.2 to 0.8. As we can see from Fig. \ref{fig:different r}, AFed-GAN consistently outperforms AFed-G and the two baseline methods. Notice that when $r=0.2$, AFed-G didn't show much improvement. In addition, the improvement of AFed-G increased with the increase of $r$. 

Training a conditional generator on the server side needs the information related to the decision boundary embedded clients' classification head $h_k^a$. The decision boundary information may be inaccurate when there are only limited clients. As a result, the server fails to train a good generator $G$ that can produce fake latent features of high quality (i.e., lie within the decision boundary). On the contrary, the generator in AFed-GAN is free of such problems. As we can see, the performance of AFed-GAN barely changes w.r.t $r$.

\section{Conclusion}
In this paper, we propose AFed-G and AFed-GAN, two algorithms designed to learn a fair model in FL without centralized access to clients' data. We first employ a generative model to learn the clients' data distribution, which is then shared with all clients. Subsequently, each client's data is augmented with synthetic samples, providing a broader view of the global data distribution. This approach helps derive a more generalized and fair model. Both empirical and theoretical analyses validate the effectiveness of the proposed method. One potential future application of our methods is the case of multiple sensitive attributes, where the data is divided into several groups. Additionally, the proposed framework could be extended to FL scenarios where each client uses a different model architecture, provided they share the same latent feature dimension.
 
\ifCLASSOPTIONcaptionsoff
  \newpage
\fi
\printbibliography

\begin{IEEEbiography}
[{\includegraphics[width=1in,height=1.25in,clip,keepaspectratio]{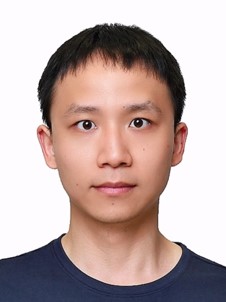}}] 
{Huiqiang Chen} received the B.E. degree from Hunan University, China, in 2016, M.S. degree form Zhejiang University, China, in 2019, and the Ph.D. degree from the School of Computer Science, University of Technology Sydney, in 2025. His research interests center on trustworthy machine learning and cybersecurity, with a focus on AI security, privacy-preserving techniques, and fairness-aware algorithms in deep neural networks.
\end{IEEEbiography}

\begin{IEEEbiography}
[{\includegraphics[width=1in,height=1.25in,clip,keepaspectratio]{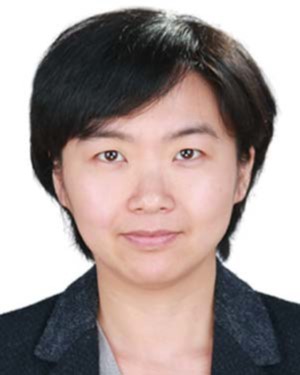}}] 
{Tianqing Zhu} received the B.Eng. and M.Eng. degrees from Wuhan University, Wuhan, China, in 2000 and 2004, respectively, and the Ph.D. degree from Deakin University, Sydney, Australia, in 2014. She was a Lecturer with the School of Information Technology, Deakin University, from 2014 to 2018, and an Associate Professor with the University of Technology Sydney, Ultimo, NSW, Australia. She is currently a Professor with the Faculty of Data Science, City University of Macau, Macau, SAR, China. Her research interests include cyber security and privacy in artificial intelligence (AI).
\end{IEEEbiography}

\begin{IEEEbiography}
[{\includegraphics[width=1in,height=1.25in,clip,keepaspectratio]{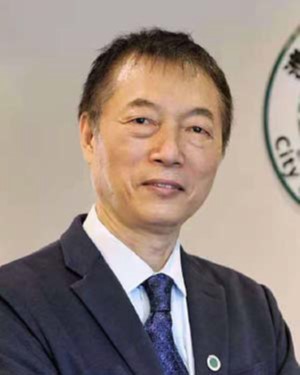}}] 
{Wanlei Zhou} received the PhD degree in computer science and Engineering from The Australian National University, Canberra, Australia, in 1991 and the DSc degree (a higher Doctorate degree) from Deakin University in 2002. He is currently the vice-rector (Academic Affairs) and dean of the Institute of Data Science, City University of Macau, Macao, China. He has authored or co-authored more than 400 papers in refereed international journals and refereed international conference proceedings, including many articles in IEEE Transactions and journals. His main research interests include security, privacy, and distributed computing.
\end{IEEEbiography}

\begin{IEEEbiography}
[{\includegraphics[width=1in,height=1.25in,clip,keepaspectratio]{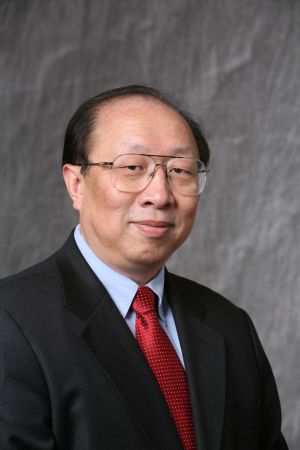}}] 
{Wei Zhao} An internationally renowned scholar, Professor Wei Zhao has served important leadership roles in academia, including, currently the Provost and Chair Professor at the Shenzhen University of Advanced Technology, and previously the eighth Rector (i.e., President) of the University of Macau, the Dean of Science at Rensselaer Polytechnic Institute, the Director for the Division of Computer and Network Systems in the U.S. NSF, the Chief Research Officer (i.e., VPR) at the American University of Sharjah, the Chair of Academic Council at the Shenzhen Institute of Advanced Technology, and the Senior Associate Vice President for Research at Texas A\&M University. An IEEE Fellow, Professor Zhao has made significant contributions in cyber-physical systems, distributed computing, real-time systems, computer networks, and cyberspace security. He led the effort to define the research agenda of and to create the very first funding program for cyber-physical systems when he served as the NSF CNS Division Director in 2006. His research group has received numerous awards. Their research results have been adopted in the standard of DOD SAFENET. Professor Zhao was honored with the Lifelong Achievement Award by the Chinese Association of Science and Technology, and the Overseas Achievement Award by the Chinese Computer Federation. He has been conferred honorable doctorates by 12 universities in the world and academician of the International Eurasian Academy of Sciences.
\end{IEEEbiography}
\end{document}